\def\eqref#1{equation~\ref{#1}}
\def\1{\bm{1}}
\DeclareMathAlphabet{\mathsfit}{\encodingdefault}{\sfdefault}{m}{sl}
\SetMathAlphabet{\mathsfit}{bold}{\encodingdefault}{\sfdefault}{bx}{n}
\def\gA{{\mathcal{A}}}
\def\gB{{\mathcal{B}}}
\def\gD{{\mathcal{D}}}
\def\gG{{\mathcal{G}}}
\def\gS{{\mathcal{S}}}
\def\gT{{\mathcal{T}}}
\def\gX{{\mathcal{X}}}
\def\sN{{\mathbb{N}}}
\def\sP{{\mathbb{P}}}
\def\sR{{\mathbb{R}}}
\newcommand{\E}{\mathbb{E}}
\DeclareRobustCommand{\cev}[1]{%
  \mathpalette\do@cev{#1}%
}
\newcommand{\do@cev}[2]{%
  \fix@cev{#1}{+}%
  \reflectbox{$\m@th#1\vec{\reflectbox{$\fix@cev{#1}{-}\m@th#1#2\fix@cev{#1}{+}$}}$}%
  \fix@cev{#1}{-}%
}
\newcommand{\fix@cev}[2]{%
  \ifx#1\displaystyle
    \mkern#23mu
  \else
    \ifx#1\textstyle
      \mkern#23mu
    \else
      \ifx#1\scriptstyle
        \mkern#22mu
      \else
        \mkern#22mu
      \fi
    \fi
  \fi
}
\definecolor{mydarkblue}{rgb}{0,0.08,0.45}
\theoremstyle{plain}
\declaretheoremstyle[
    headfont=\bfseries, 
    bodyfont=\normalfont\itshape,
    headpunct={.},
    postheadspace=0.5em,
    spacebelow=\parsep,
    spaceabove=\parsep,
    mdframed={
        roundcorner = 5pt,%
        backgroundcolor = gray!10,%
        innertopmargin = \topskip,%
        splittopskip = \topskip,%
        hidealllines = true,%
    } 
]{framedstyle}
\declaretheorem[parent=section, style=framedstyle, name=Theorem]{ftheorem}
\newtheorem{theorem}[ftheorem]{Theorem}
\newtheorem{proposition}[theorem]{Proposition}
\newtheorem{lemma}[theorem]{Lemma}
\theoremstyle{definition}
\newtheorem{definition}[theorem]{Definition}
\theoremstyle{remark}
\newcommand{\children}{\mathrm{Ch}}
\newcommand{\Pa}{\mathrm{Pa}}
\title{Generative Flow Networks:\\a Markov Chain Perspective}
\def\thanks#1{\protected@xdef\@thanks{\@thanks
        \protect\footnotetext{#1}}}
\author{
    Tristan~Deleu{\normalfont\textsuperscript{1}}\thanks{\textsuperscript{1}Universit\'{e} de Montr\'{e}al, \textsuperscript{2}CIFAR Senior Fellow.}\thanks{\textsuperscript{\phantom{1}}Correspondence: Tristan Deleu (\href{mailto:deleutri@mila.quebec}{deleutri@mila.quebec})}\qquad Yoshua~Bengio{\normalfont\textsuperscript{1,2}}\\[1ex]
    Mila -- Quebec AI Institute
}
\begin{document}

\maketitle

\begin{abstract}
    While Markov chain Monte Carlo methods (MCMC) provide a general framework to sample from a probability distribution defined up to normalization, they often suffer from slow convergence to the target distribution when the latter is highly multi-modal. Recently, Generative Flow Networks (GFlowNets) have been proposed as an alternative framework to mitigate this issue when samples have a clear compositional structure, by treating sampling as a sequential decision making problem. Although they were initially introduced from the perspective of flow networks, the recent advances of GFlowNets draw more and more inspiration from the Markov chain literature, bypassing completely the need for flows. In this paper, we formalize this connection and offer a new perspective for GFlowNets using Markov chains, showing a unifying view for GFlowNets regardless of the nature of the state space as recurrent Markov chains. Positioning GFlowNets under the same theoretical framework as MCMC methods also allows us to identify the similarities between both frameworks, and most importantly to highlight their differences.
\end{abstract}

\section{Introduction}
\label{sec:introduction}
Sampling from a probability distribution defined up to a normalization constant $p(x) = R(x) / Z$ can be a challenging problem when the normalization constant $Z$ is intractable (e.g., the partition function of an energy-based model over a large, or even continuous, sample space), and only the unnormalized probability $R(x)$ can be computed easily. Besides energy-based models, this also includes posterior distributions of the form $p(x\mid \gD)$ in Bayesian inference, that is the joint distribution $p(\gD\mid x)p(x)$ (which can often be evaluated analytically), normalized by the evidence $p(\gD)$, which is typically intractable as well. In those cases, Markov chain Monte Carlo methods (MCMC; \citealp{neal1993probabilistic}) have proven to be a versatile tool to sample from such distributions defined up to normalization. Despite their generality though, MCMC methods may suffer from slow mixing when the target distribution $p(x)$ is highly multi-modal, affecting the convergence of these methods and therefore yielding poor samples which are not representative of $p(x)$.

Recently, \emph{Generative Flow Networks} (GFlowNets; \citealp{bengio2021gflownet,bengio2021gflownetfoundations}) have been proposed as an alternative framework to sample from distributions defined up to normalization, when the samples in question have a natural compositional structure. Applications of GFlowNets include generating small molecules \citep{bengio2021gflownet}, Bayesian structure learning of Bayesian Networks \citep{deleu2022daggflownet,deleu2023jspgfn}, modeling Bayesian posteriors over structured latent variable models \citep{hu2023gfnem}, generating biological sequences \citep{jain2022gfnbiological}, as well as scientific discovery at large \citep{jain2023gfnscientific}. Unlike MCMC, GFlowNets treat the generation of a sample not as a Markov chain over the sample space, but as a sequential decision making problem where each new sample is constructed piece by piece from scratch, mitigating the problem of sampling from diverse modes of a multi-modal target distribution. \citet{lahlou2023continuousgfn} recently extended this framework to more general state spaces, including continuous spaces.

Even though GFlowNets were originally introduced from the perspective of \emph{flow networks} \citep{bengio2021gflownet}, most of the recent advances completely ignore the need to work with flows, and work directly as a Markovian process, with conditions heavily inspired by the literature on Markov chains (e.g., the detailed balance conditions; \citealp{bengio2021gflownetfoundations}). In this work, we formalize this connection with Markov chains by treating a GFlowNet as a recurrent Markov chain, whose certain marginal distribution matches the target distribution under some boundary conditions. This new perspective under the same theoretical framework as MCMC methods allows us to better understand the similarities between both methods, but also to identify clearly their differences. The objective of this paper is also to provide an introduction to GFlowNets for researchers with expertise in MCMC methods, using a familiar approach and a similar vocabulary.

\section{Discrete Generative Flow Networks}
\label{sec:discrete-gflownet}
In this section, we recall the formalism of GFlowNets as a pointed Directed Acyclic Graph (DAG), as described in \citet{bengio2021gflownetfoundations}, and introduce a new perspective using recurrent Markov chains. For some DAG $\gG$, we will use the notations $\Pa_{\gG}(s)$ and $\children_{\gG}(s)$ to denote the set of parents and children of $s$ respectively in $\gG$.

\subsection{Flow networks over pointed Directed Acyclic Graphs}
\label{sec:flow-networks-pointed-dags}
A \emph{Generative Flow Network} (GFlowNet) is a generative model that treats the generation of an object as a sequential decision making problem \citep{bengio2021gflownet,bengio2021gflownetfoundations}. We assume in this section that these objects $x \in \gX$ are discrete and have some compositional structure, where the sample space is denoted by $\gX$. A GFlowNet constructs a sample $x \in \gX$ by creating a process over a (finite) superset of states $\gS \supseteq \gX$, starting at some fixed \emph{initial state} $s_{0} \in \gS$, leveraging the compositional structure of $x$. For example, \citet{bengio2021gflownet} used a GFlowNet to define a distribution over molecules, where $\gX$ represents the space of all (complete) molecules, which can be constructed piece by piece by attaching a new fragment to a partially constructed molecule, i.e. a state in $\gS \backslash \gX$, starting from the empty state $s_{0}$. Beyond containing the sample space $\gX$, the states of $\gS$ therefore serve as intermediate steps along the generation process.

\citet{bengio2021gflownetfoundations} formalized this process by structuring the state space $\gS$ as a Directed Acyclic Graph (DAG) $\gG = (\bar{\gS}, \gA)$, whose vertices $\bar{\gS} = \gS \cup \{s_{f}\}$ correspond to the states of the GFlowNet $\gS$, with the addition of an abstract state $s_{f} \notin \gS$ with no child, called the \emph{terminal state}. The terminal state is added in such a way that its parents are exactly the elements of the sample space: $\Pa_{\gG}(s_{f}) = \gX$. The states $x\in\gX$ are called \emph{terminating states}. The edges of the DAG follow the compositional structure of the states in $\gS$, while guaranteeing acyclicity; for example, there may be an edge $s \rightarrow s' \in \gA$ if $s'$ is the result of adding a new fragment to a partial molecule $s$. We also assume that all the states $s\in\gS$ are accessible from the initial state $s_{0}$, and that $\gG$ is a \emph{pointed DAG}, meaning that $\gG$ is rooted at $s_{0}$ and any trajectory starting at $s_{0}$ following the edges in $\gA$ eventually terminates at $s_{f}$. See \cref{fig:discrete-gflownet} (left) for an example of a pointed DAG.

In addition to the pointed DAG structure over states, every terminating state $x\in\gX$ is associated with a \emph{reward} $R(x) > 0$, indicating a notion of ``preference'' for certain states. By convention, we set $R(s) = 0$ for any intermediate state $s \in \gS \backslash \gX$. The goal of a GFlowNet is to find a \emph{flow} $F$ along the edges of $\gG$ that satisfy, for all the states $s' \in \gS \backslash \{s_{0}\}$, the following \emph{flow-matching condition}:
\begin{equation}
    \sum_{\mathclap{s \in \Pa_{\gG}(s')}}\;F(s \rightarrow s') -\;\sum_{\mathclap{s'' \in \children_{\gG}(s')}}\;F(s' \rightarrow s'') = R(s').
    \label{eq:flow-matching-condition}
\end{equation}
The condition above has an intuitive interpretation: we want the total amount of flow going into $s'$ to be equal to the total amount of flow going out of $s'$, with some residual $R(s')$. If \cref{eq:flow-matching-condition} is satisfied for all $s' \in \gS \backslash \{s_{0}\}$, then the GFlowNet induces a distribution over the terminating states $x \in \gX$ which is proportional to $R(x)$. More precisely, if we sample a complete trajectory $(s_{0}, s_{1}, \ldots, s_{T}, x, s_{f})$ using a transition probability distribution defined by normalizing the outgoing flows
\begin{equation}
    P_{F}(s_{t+1} \mid s_{t}) \propto F(s_{t} \rightarrow s_{t+1}),
    \label{eq:forward-transition-probability}
\end{equation}
with the conventions $s_{T+1} = x$ and $s_{T+2} = s_{f}$, then $x$ is sampled with probability proportional to $R(x)$. The distribution induced by the GFlowNet $P_{F}^{\top}(x) \propto R(x)$ is called the \emph{terminating state probability distribution} \citep{bengio2021gflownetfoundations}, and corresponds to the marginal distribution of the process described above at $x\in\gX$:
\begin{equation}
    P_{F}^{\top}(x) \triangleq P_{F}(x\rightarrow s_{f})\sum_{\tau: s_{0} \rightsquigarrow x}\prod_{t=0}^{T_{\tau}}P_{F}(s_{t+1}\mid s_{t}),
\end{equation}
where $s_{0} \rightsquigarrow x$ denotes all the possible trajectories $\tau$ from $s_{0}$ to $x$, following the edges in $\gA$. Independent samples of $P_{F}^{\top}$ can be obtained by running the above process multiple times, with completely independent trajectories.

\subsection{The cyclic structure of GFlowNets}
\label{sec:cyclical-discrete-gflownet}

\begin{figure}[t]
    \centering
    \includegraphics[]{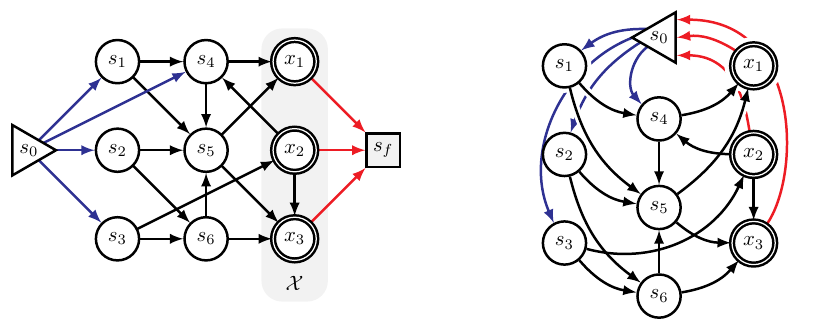}
    \caption{A discrete GFlowNet as a recurrent Markov chain. (Left) The pointed DAG structure of a GFlowNet, with the flow going from the initial state $s_{0}$ to the terminal state $s_{f}$. The terminating states $x\in \gX$ are represented with double circles. (Right) The ``wrapped around'' transformation of the same GFlowNet, where the initial state has been merged with the terminal state $s_{f} \equiv s_{0}$.}
    \label{fig:discrete-gflownet}
\end{figure}

Instead of treating independent samples of the GFlowNet as being completely separate trajectories in a pointed DAG, we can view this process as being a \emph{single} Markov chain that regenerates every time it reaches $s_{0}$. We consider a Markov chain over a discrete (but not necessarily finite) state space $\gS$, with transition probability $P_{F}$. We assume that this Markov chain is \emph{irreducible}, to ensure that the Markov chain reaches any state $s\in\gS$ from the initial state, which is a necessary condition of GFlowNets. This may be enforced by requiring $P_{F}$ to follow an appropriate directed graph structure over the state space. Furthermore, we assume that the Markov chain is \emph{positive recurrent}, to guarantee the existence of an invariant measure $F$:
\begin{equation}
    \forall s' \in \gS, \qquad F(s') = \sum_{s\in \gS}F(s)P_{F}(s, s').
    \label{eq:invariant-measure}
\end{equation}
The invariant measure $F$ plays the role of the \emph{state flow} in GFlowNets \citep{bengio2021gflownetfoundations}, so that the product $F(s)P_{F}(s, s')$ would represent the edge flow in \cref{sec:flow-networks-pointed-dags} and \cref{eq:invariant-measure} can be viewed as an alternative way to write the flow-matching conditions \cref{eq:flow-matching-condition} in a GFlowNet, apart from the residual part in $R(s')$. Recurrence can be achieved by ``wrapping around'' the structure of the GFlowNet, and merging the terminal state $s_{f}$ with the initial state $s_{0}$. An example of this construction from a pointed DAG is shown in \cref{fig:discrete-gflownet}. Unlike recurrence though, the graph structure alone combined with $s_{f}\equiv s_{0}$ is not enough to conclude that the Markov chain is positive. However, positiveness here is only required when $\gS$ is infinite, since any recurrent Markov chain over a finite state space is necessarily positive (and therefore admits an invariant measure); see \cref{app:counter-example-positive-recurrence} for a detailed counter-example. The invariant measure of such a Markov chain is essentially unique, up to a normalization constant (see \cref{thm:unique-invariant-measure}).

\paragraph{Terminating state probability.} The fact that the Markov chain is recurrent guarantees that it will return back to its initial state $s_{0}$ an infinite amount of time. We can define the \emph{return time} $\sigma_{s_{0}}$ as being the (random) time the chain first goes back to the initial state:
\begin{equation}
    \sigma_{s_{0}} = \inf \{k \geq 1\mid X_{k} = s_{0}\},
    \label{eq:return-time}
\end{equation}
where $(X_{k})_{k\geq 0}$ is the (canonical) Markov chain following the transition probability $P_{F}$ and such that $X_{0} = s_{0}$. Excursions of the Markov chain between two consecutive returns to $s_{0}$ are independent by the strong Markov property, since $\sigma_{s_{0}}$ is a stopping time; in other words, the independence of trajectories for two different samples in a GFlowNet is preserved even with a single recurrent Markov chain running indefinitely. We can define the \emph{terminating state probability} from \cref{sec:flow-networks-pointed-dags} in terms of the return time of the Markov chain.

\begin{definition}[Terminating state probability]
    Given an irreducible and positive recurrent Markov chain over $\gS$, with transition probability $P_{F}$, the \emph{terminating state probability distribution} is defined as the marginal distribution of the chain right before returning to its initial state $s_{0}$:
    \begin{equation}
        P_{F}^{\top}(x) \triangleq \sP_{s_{0}}(X_{\sigma_{s_{0}}-1} = x) = \E_{s_{0}}\big[\mathbbm{1}(X_{\sigma_{s_{0}}-1}=x)\big]
        \label{eq:terminating-state-probability}
    \end{equation}
    \label{def:terminating-state-probability}
\end{definition}

We show in \cref{prop:terminating-state-probability-proper-distribution} that $P_{F}^{\top}$ is a properly defined probability distribution (i.e., non-negative and sums to 1) over $\gX\subseteq \gS$, corresponding to the parents of the initial state $s_{0}$ in the directed graph structure described above: $\gX = \{s\in\gS \mid P_{F}(s, s_{0}) > 0\}$.

\paragraph{Boundary conditions.} The goal of a GFlowNet is to find a transition probability $P_{F}$ such that the corresponding terminating state probability $P_{F}^{\top}$ matches the reward function, up to normalization. Similar to \cref{sec:flow-networks-pointed-dags}, this requires satisfying \emph{boundary conditions} of the form
\begin{equation}
    \forall x\in \gX, \qquad F(x)P_{F}(x, s_{0}) = R(x),
    \label{eq:boundary-condition}
\end{equation}
in addition to $F$ being an invariant measure for $P_{F}$ (i.e., the ``flow-matching conditions''). Using the interpretation of $F(x)P_{F}(x, s_{0})$ as being the flow through $x \rightarrow s_{0}$ as mentioned above, these boundary conditions are equivalent to the reward matching conditions enforced in GFlowNets \citep{bengio2021gflownetfoundations}. If $P_{F}$ admits an invariant measure $F$ that satisfies the boundary conditions above, then we get the fundamental theorem of GFlowNets that guarantees that the terminating state probability distribution induced by the GFlowNet is proportional to the reward.
\vspace*{2\parsep}
\begin{restatable}{ftheorem}{invariantterminatingstate}
    Let $P_{F}$ be an irreducible and positive recurrent Markov kernel over $\gS$ that admits an invariant measure $F$ such that $\forall x\in\gX,\ F(x)P_{F}(x, s_{0}) = R(x)$, where $R$ is a finite measure on $\gX \subseteq \gS$. Then the terminating state probability distribution defined in \cref{def:terminating-state-probability} is proportional to the measure $R$:
    \begin{equation*}
        \forall x \in \gX, \qquad P_{F}^{\top}(x) = \frac{R(x)}{\sum_{x'\in\gX}R(x')}
    \end{equation*}
    \label{thm:invariant-terminating-state}
\end{restatable}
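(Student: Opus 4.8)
The plan is to reduce everything to the standard occupation-measure representation of the invariant measure of a recurrent chain, and then feed in the boundary conditions. Concretely, I would first recall the classical fact that for an irreducible recurrent kernel $P_{F}$ the measure
\[
\mu(s) \triangleq \E_{s_0}\Big[\sum_{j=0}^{\sigma_{s_0}-1}\mathbbm{1}(X_j = s)\Big],
\]
the expected number of visits to $s$ during a single excursion away from $s_0$, is an invariant measure for $P_{F}$, normalized so that $\mu(s_0)=1$ (since the chain sits at $s_0$ only at time $j=0$ on $\{0,\dots,\sigma_{s_0}-1\}$). As $P_{F}$ is positive recurrent, I can then invoke the essential uniqueness of the invariant measure (\cref{thm:unique-invariant-measure}) to conclude $F = F(s_0)\,\mu$, i.e. $\mu(x) = F(x)/F(s_0)$ for every state $x$.

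The core computation is to express $P_{F}^{\top}(x)$ through $\mu$. I would decompose the defining event $\{X_{\sigma_{s_0}-1}=x\}$ according to the value of the return time, writing $P_{F}^{\top}(x) = \sum_{k\geq 2}\sP_{s_0}(X_1\neq s_0,\ldots,X_{k-2}\neq s_0,\,X_{k-1}=x,\,X_k=s_0)$; the $k=1$ term vanishes because $X_0=s_0\neq x$ for $x\in\gX$. Applying the Markov property at time $k-1$ peels off the last transition as a factor $P_{F}(x,s_0)$, leaving $\sP_{s_0}(X_1\neq s_0,\ldots,X_{k-2}\neq s_0,\,X_{k-1}=x)$. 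Re-indexing by $j=k-1$ and summing, the fact that $x\neq s_0$ makes the event $\{X_1\neq s_0,\ldots,X_{j-1}\neq s_0,\,X_j=x\}$ coincide with $\{\sigma_{s_0}>j,\,X_j=x\}$, so that $\sum_{j\geq 1}\sP_{s_0}(\sigma_{s_0}>j,\,X_j=x) = \E_{s_0}\big[\sum_{j=1}^{\sigma_{s_0}-1}\mathbbm{1}(X_j=x)\big] = \mu(x)$. Hence $P_{F}^{\top}(x) = P_{F}(x,s_0)\,\mu(x)$.

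Combining the two steps gives $P_{F}^{\top}(x) = P_{F}(x,s_0)\,F(x)/F(s_0) = R(x)/F(s_0)$, where the last equality is the boundary condition $F(x)P_{F}(x,s_0)=R(x)$. It then remains only to identify $F(s_0)$. I would compute it from invariance of $F$ at $s_0$: since the parents of $s_0$ are exactly $\gX = \{s\in\gS \mid P_{F}(s,s_0)>0\}$, the invariance equation reads $F(s_0) = \sum_{s\in\gS}F(s)P_{F}(s,s_0) = \sum_{x\in\gX}F(x)P_{F}(x,s_0) = \sum_{x\in\gX}R(x)$, again by the boundary conditions. (Equivalently, one could invoke \cref{prop:terminating-state-probability-proper-distribution}, that $P_{F}^{\top}$ sums to one.) Substituting yields $P_{F}^{\top}(x) = R(x)/\sum_{x'\in\gX}R(x')$, as claimed.

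The main obstacle is the middle step: correctly handling the boundary and indexing in the return-time decomposition, so that the last-transition factor $P_{F}(x,s_0)$ separates cleanly and the residual sum is genuinely the occupation measure $\mu(x)$ rather than a shifted or truncated variant. The simplification that makes this work is precisely that every $x\in\gX$ is a parent of $s_0$ and hence distinct from $s_0$, which lets me drop the redundant no-return constraint at the final time. Finiteness of $R$, and hence of $F(s_0)=\sum_{x}R(x)$, guarantees the normalization is well defined, and all interchanges of sum and expectation are justified by Tonelli since every term is non-negative.
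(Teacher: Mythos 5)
Your proposal is correct and follows essentially the same route as the paper: the occupation-measure representation of the invariant measure together with its uniqueness (\cref{thm:unique-invariant-measure}), the identity $P_{F}^{\top}(x) = \mu(x)P_{F}(x,s_{0})$ established via the same return-time decomposition and Markov-property argument as the paper's \cref{lem:relation-unique-invariant-terminating-state-probability}, and the boundary conditions to conclude. The only cosmetic difference is that you identify the constant $F(s_{0}) = \sum_{x'\in\gX}R(x')$ via invariance at $s_{0}$ (which the paper records as a separate small proposition), whereas the paper's main proof invokes the normalization of $P_{F}^{\top}$ from \cref{prop:terminating-state-probability-proper-distribution} --- an equivalence you note yourself.
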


The proof of this theorem is based on the unicity of the invariant measure of $P_{F}$ up to a normalization constant, and is available in \cref{app:discrete-spaces}.

In practice, the objective is to find a transition probability $P_{F}$ and a measure $F$, which may be parametrized by neural networks, that satisfy the conditions of \cref{thm:invariant-terminating-state}: $F$ must satisfy the boundary conditions of \cref{eq:boundary-condition}, and must also be invariant for $P_{F}$. The summation in \cref{eq:invariant-measure} is typically inexpensive if the state space is well structured (i.e., if the objects have a clear compositional nature), since we only have to sum over the parents of $s'$---all the states $s\in \gS$ such that $P_{F}(s, s') > 0$. However, checking the invariance of $F$ at the initial state $s_{0}$ proves to be as difficult as computing the partition function itself, since the parent set of $s_{0}$ is the whole sample space $\gX$. Fortunately, the following proposition shows that we only have to check that $F$ satisfies \cref{eq:invariant-measure} for any state $s' \neq s_{0}$.

\begin{restatable}{proposition}{invariantmeasurenoinitialstate}
    Let $P_{F}$ be an irreducible and positive recurrent Markov kernel over $\gS$. A measure $F$ is invariant for $P_{F}$ if and only if for all $s' \neq s_{0}$ we have
    \begin{equation}
        F(s') = \sum_{s\in\gS}F(s)P_{F}(s, s').
        \label{eq:invariant-measure-no-initial-state}
    \end{equation}
    \label{prop:invariant-measure-no-initial-state}
\end{restatable}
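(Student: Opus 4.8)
The forward (``only if'') implication is immediate: if $F$ is invariant then \cref{eq:invariant-measure} holds at \emph{every} state $s'\in\gS$, in particular at all $s'\neq s_{0}$, so the entire content lies in the converse. For that direction the plan is to assume the balance equation \cref{eq:invariant-measure-no-initial-state} at all $s'\neq s_{0}$ and recover it at $s'=s_{0}$ through a global flow-conservation argument, exploiting that $P_{F}$ is a Markov kernel and hence row-stochastic, $\sum_{s'\in\gS}P_{F}(s,s')=1$. Writing $G(s')\triangleq\sum_{s\in\gS}F(s)P_{F}(s,s')$ for the one-step pushforward of $F$ (the total inflow into $s'$), while $F(s')$ itself is the total outflow from $s'$, I would first compute the total mass of $G$ by interchanging the order of summation (legitimate by Tonelli, since all terms are non-negative) and collapsing the inner sum with row-stochasticity:
\begin{equation*}
    \sum_{s'\in\gS}G(s') \;=\; \sum_{s\in\gS}F(s)\sum_{s'\in\gS}P_{F}(s,s') \;=\; \sum_{s\in\gS}F(s) \;=\; \sum_{s'\in\gS}F(s').
\end{equation*}
This is precisely the statement that, summed over the whole chain, total inflow equals total outflow.

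Next I would split both total masses according to whether the state is $s_{0}$ or not, and use the hypothesis $G(s')=F(s')$ for $s'\neq s_{0}$ to cancel the off-$s_{0}$ contributions. Concretely, $\sum_{s'\in\gS}G(s') = G(s_{0}) + \sum_{s'\neq s_{0}}F(s')$ whereas $\sum_{s'\in\gS}F(s') = F(s_{0}) + \sum_{s'\neq s_{0}}F(s')$; equating these two expressions and removing the common sum $\sum_{s'\neq s_{0}}F(s')$ leaves $G(s_{0})=F(s_{0})$, which is exactly \cref{eq:invariant-measure} at the initial state. Combined with the hypothesis at all other states, this shows $F$ is invariant for $P_{F}$.

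The step that needs genuine care---and where positive recurrence enters---is the final cancellation of $\sum_{s'\neq s_{0}}F(s')$, which is valid only when this quantity is finite. For a positive recurrent kernel the invariant measure is finite (it normalizes to a probability distribution), and in the GFlowNet setting $F$ is a finite measure whose total mass is the partition function; I would therefore state and prove the result for finite $F$, equivalently under $\sum_{s'\in\gS}F(s')<\infty$, under which the subtraction is justified. If one instead allowed $F$ to carry infinite total mass, both sides of the mass identity would equal $+\infty$ and the cancellation would fail, so the finiteness of $F$ is the crucial hypothesis rather than a mere technicality, and it is the one point I would flag explicitly rather than the routine Tonelli interchange.
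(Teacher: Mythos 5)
Your proof of the converse is correct as far as it goes, but it takes a genuinely different route from the paper's. The paper argues via the excursion representation: it first claims (by iterating \cref{eq:invariant-measure-no-initial-state} away from $s_{0}$) that any solution of the off-$s_{0}$ balance equations must equal $F(s_{0})\lambda$, where $\lambda$ is the excursion occupation measure of \cref{thm:unique-invariant-measure}, and then verifies balance at $s_{0}$ by decomposing over the return time and using recurrence, $\sP_{s_{0}}(\sigma_{s_{0}}<\infty)=1$. You instead never identify $F$: you compare the total mass of $F$ with that of its one-step pushforward (Tonelli plus row-stochasticity) and cancel the common off-$s_{0}$ contribution. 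What your route buys: it is more elementary, and it uses neither irreducibility nor recurrence---for \emph{any} Markov kernel, a finite measure satisfying balance at all but one state satisfies it everywhere. What it loses: the explicit identification $F=F(s_{0})\lambda$, which the paper reuses downstream (it is the bridge to \cref{lem:relation-unique-invariant-terminating-state-probability} and hence to \cref{thm:invariant-terminating-state}).

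Your closing caveat about finiteness deserves emphasis, because it is not a restriction you introduced for convenience: the proposition as literally stated, for arbitrary (possibly infinite-mass) measures, is \emph{false}, and the paper's induction sketch silently breaks at exactly this point (iterating the balance equations only yields $F \geq F(s_{0})\lambda$; the remainder term need not vanish when $F(\gS)=\infty$). Concretely, take $\gS=\sN$ with $s_{0}=0$, $P_{F}(0,1)=1$, and for $n\geq 1$, $P_{F}(n,n+1)=p$, $P_{F}(n,n-1)=q=1-p$ with $p\in(0,1/2)$; this kernel is irreducible and positive recurrent. The pointwise-finite measure
\begin{equation*}
    F(0) = 2p, \qquad F(n) = 1 + (p/q)^{n} \quad (n \geq 1),
\end{equation*}
satisfies \cref{eq:invariant-measure-no-initial-state} at every $s'\neq 0$: both $u_{n}\equiv 1$ and $u_{n}=(p/q)^{n}$ solve $u_{n}=p\,u_{n-1}+q\,u_{n+1}$, and at $s'=1$ both sides equal $1/q$ since $2p+q+p^{2}/q=(p+q)^{2}/q$. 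Yet at $s_{0}=0$ the inflow is $qF(1)=1\neq 2p=F(0)$, so $F$ is not invariant; note also $F(n)\to 1$ while $F(0)\lambda(n)\to 0$, confirming that the representation $F=F(s_{0})\lambda$ fails here. Since this $F$ has infinite total mass but finite mass at each state, $\sigma$-finiteness does not rescue the unrestricted claim---only finiteness of $F$ does, exactly as your cancellation step requires. This restriction is harmless for the paper's purposes: in the ``only if'' direction positive recurrence forces any invariant measure to be finite ($\lambda(\gS)=\E_{s_{0}}[\sigma_{s_{0}}]<\infty$), and the flows $F$ used in \cref{thm:invariant-terminating-state} have finite total mass (the partition function), so your version of the proposition suffices everywhere it is invoked.
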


The proof is available in \cref{app:discrete-spaces}. This result mirrors the practical implementation of GFlowNets as a pointed DAG, where the flow-matching conditions are never checked at the terminal state $s_{f}$, since it corresponds to an abstract state that is not in $\gS$. The invariance of $F$ over the whole state space is only a convenient tool that allows us to use existing results from the Markov chain literature, which can be extended to general state spaces.

\section{Generative Flow Networks over General State Spaces}
\label{sec:general-gflownet}

\subsection{Existing extensions of GFlowNets beyond discrete state spaces}
\label{sec:existing-continuous-gflownets}
While GFlowNets were initially introduced to construct distributions over discrete objects \citep{bengio2021gflownet}, some existing works have proposed to generalize this framework beyond discrete state spaces. For example in CFlowNets \citep{li2023cflownets}, the authors considered the case where $\gS$ is a continuous space, and introduced a flow-matching condition where the summations in \ref{eq:flow-matching-condition} were simply replaced by integrals. As highlighted by \citet{lahlou2023continuousgfn} though, implicit assumptions made on the transition function and the omission of critical aspects of GFlowNets, such as the accessibility of any state $s\in\gS$ from the initial state $s_{0}$, severely limit the scope of applications of CFlowNets.

Closely related to our work, \citet{lahlou2023continuousgfn} introduced a theoretical framework for studying GFlowNets in general state spaces, thus including continuous state spaces, and even hybrid spaces with both discrete and continuous components \citep{deleu2023jspgfn}. For a measurable space $(\gS, \Sigma)$, where $\Sigma$ is a $\sigma$-algebra on $\gS$, their approach relies on the notion of \emph{Markov kernels}, which generalizes transition probabilities in discrete spaces.

\begin{definition}[Markov kernel]
    Let $(\gS, \Sigma)$ be a measurable state space. A function $\kappa: \gS \times \Sigma \rightarrow [0, +\infty)$ is called a positive $\sigma$-finite \emph{transition kernel} if
    \begin{enumerate}
        \item For any $B \in \Sigma$, the mapping $s\mapsto \kappa(s, B)$ is measurable, where the space $[0, +\infty)$ is associated with the Borel $\sigma$-algebra $\gB([0, +\infty))$;
        \item For any $s\in\gS$, the mapping $B \mapsto \kappa(s, B)$ is a positive $\sigma$-finite measure on $(\gS, \Sigma)$.
    \end{enumerate}
    Furthermore, if the mappings $\kappa(s, \cdot)$ are probability distributions (i.e., $\kappa(s, \gS) = 1$), the transition kernel is called a \emph{Markov kernel}.
\end{definition}

Taking inspiration from the pointed DAG formulation \citep{bengio2021gflownetfoundations}, \citet{lahlou2023continuousgfn} augmented the state space $\bar{\gS} = \gS \cup \{\bot\}$ with a distinguished element $\bot \notin \gS$, and proposed a generalization of the pointed DAG structure to measurable spaces, called a ``measurable pointed graph'', which is also ``finitely absorbing'' in the sense that any Markov chain starting at $s_{0}$ eventually reaches $\bot$ in bounded time. Unlike CFlowNets \citep{li2023cflownets}, which were still operating on edge flows directly as in \cref{sec:flow-networks-pointed-dags}, they defined a flow on this measurable pointed graph as a tuple $F = (\mu, \bar{P}_{F})$ satisfying the following flow-matching conditions
\begin{equation}
    \int_{\bar{\gS}}f(s')\mu(ds') = \iint_{\gS\times \bar{\gS}}f(s')\mu(ds)\bar{P}_{F}(s, ds'),
    \label{eq:lahlou-flow-matching-conditions}
\end{equation}
for any measurable function $f: \bar{S}\rightarrow \sR$ such that $f(s_{0}) = 0$, where $\bar{P}_{F}$ is a Markov kernel\footnote{We use the notation $\bar{P}_{F}$ for the Markov kernel in \citet{lahlou2023continuousgfn}, to avoid confusion with the rest of the paper where we use another Markov kernel $P_{F}$, which in particular will be defined over $\gS$ and not $\bar{\gS}$.} on $(\bar{\gS}, \bar{\Sigma})$ and $\mu$ is a measure over $\gS$ ($\bar{\Sigma}$ being the augmented $\sigma$-algebra associated to $\bar{\gS}$, built from $\Sigma$).

\subsection{Harris recurrence and invariant measures}
\label{sec:harris-recurrence-invariant-measures}
We consider a Markov kernel $P_{F}$ on a measurable space $(\gS, \Sigma)$. For a fixed measure $\phi$ on $\gS$, we will assume that $P_{F}$ is \emph{$\phi$-irreducible}, meaning that any set $B \in \Sigma$ such that $\phi(B) > 0$ is accessible from any state in $\gS$ (see \cref{def:phi-irreducibility} for details). Similar to the discrete case, we will also assume some form of recurrence for the Markov kernel in addition to irreducibility, in order to guarantee the existence of an invariant measure for $P_{F}$. For general state spaces, we will use a stronger notion of recurrence called \emph{Harris recurrence}.

\begin{definition}[Harris recurrence]
    A $\phi$-irreducible Markov kernel $P_{F}$ is said to be \emph{Harris recurrent} if for all set $B \in \Sigma$ such that $\phi(B) > 0$, any Markov chain starting in $B$ eventually returns back to $B$ in finite time with probability 1:
    \begin{equation}
        \forall s \in B,\qquad \sP_{s}(\sigma_{B} < \infty) = 1,
    \end{equation}
    where $\sigma_{B} = \inf \{k \geq 1\mid X_{k}\in B\}$ is the return time of the Markov chain to $B$ (similar to \cref{eq:return-time}).
    \label{def:harris-recurrence}
\end{definition}

The condition that the Markov chain returns to any accessible set with probability 1 is reminiscent of the ``finitely absorbing'' condition of \citet{lahlou2023continuousgfn} that requires all trajectories to be of bounded length (see also \cref{prop:finitely-absorbing-harris}). The fact that $P_{F}$ is Harris recurrent ensures that there exists an invariant measure $F$ over $\gS$ \citep{douc2018markovchains}, i.e., for any bounded measurable function $f: \gS \rightarrow \sR$, we have
\begin{equation}
    \int_{\gS}f(s')F(ds') = \iint_{\gS\times\gS} f(s')F(ds)P_{F}(s, ds').
    \label{eq:invariant-measure-general}
\end{equation}
The equation above is similar to the flow-matching condition \cref{eq:lahlou-flow-matching-conditions} in \citet{lahlou2023continuousgfn}, with the exception that there is no restriction of the form $f(s_{0}) = 0$, and integration is carried out on the same (non-augmented) space $\gS$. In fact, the flow-matching condition \cref{eq:lahlou-flow-matching-conditions} encodes a form of invariance of $\mu$ for $P_{F}$ everywhere \emph{except at $s_{0}$}, thus following closely the conditions in \cref{eq:flow-matching-condition} for discrete state spaces.

\subsection{Creation of an atom via the splitting technique}
The key property preserved by wrapping around the state space at $s_{0}\equiv s_{f}$ in \cref{sec:cyclical-discrete-gflownet} was that the Markov chain was effectively ``regenerating'' every time it was returning to $s_{0}$, thanks to the (strong) Markov property. In this context, $\{s_{0}\}$ is called an \emph{atom} of the Markov chain (\citealp{meyn1993markovchainsstability}), which informally corresponds to the chain ``forgetting'' about the past every time it goes through any state in the atom.

\begin{definition}[Atom]
    Let $P_{F}$ be a Markov kernel over a measurable space $(\gS, \Sigma)$. A set $A \in \Sigma$ is called an \emph{atom} if there exists a probability measure $\nu$ over $\gS$ such that $\forall s \in A$, and $\forall B \in \Sigma$,
    \begin{equation}
        P_{F}(s, B) = \nu(B).
        \label{eq:definition-atom}
    \end{equation}
\end{definition}

The notion of atom is evident for singletons in discrete spaces, but in general Markov chains may not contain any accessible atom. Although it was not interpreted this way, we can view the augmentation of the state space in \citep{lahlou2023continuousgfn} with $\bot \notin \gS$ as a way to create an accessible artificial atom at $\bot$, if we were to also (informally) wrap around the GFlowNet at $s_{0}\equiv \bot$ as in \cref{sec:cyclical-discrete-gflownet}.

In this section, we will show how the \emph{split chain} construction \citep{nummelin1978splitting} can be used as an alternative way to create a pseudo-atom in a large class of Markov chains, without changing $\gS$. Instead of introducing a new state to $\gS$, we will use a set $\gX \in \Sigma$ that satisfies a \emph{minorization condition}. This set $\gX$ will eventually correspond to the sample space of the terminating state probability distribution induced by the GFlowNet (see \cref{def:terminating-state-probability-general}).

\begin{definition}[Minorization condition; \citealp{nummelin1978splitting}]
    Let $P_{F}$ be a $\phi$-irreducible Markov kernel over a measurable space $(\gS, \Sigma)$. A set $\gX \in \Sigma$ such that $\phi(\gX) > 0$ is said to satisfy the \emph{minorization condition} if there exists a non-negative measurable function $\varepsilon$ such that $\varepsilon^{-1}((0, +\infty)) = \gX$ (i.e., $\gX$ is the set on which $\varepsilon$ is positive), and a probability measure $\nu$, such that $\forall s\in \gS$ and $\forall B\in\Sigma$
    \begin{equation}
        P_{F}(s, B) \geq \varepsilon(s)\nu(B).%
        \label{eq:minorization-condition}%
    \end{equation}%
    \label{def:minorization-condition}%
\end{definition}
Taking $B=\gS$ in the inequality above, we can see that $\varepsilon$ is necessarily bounded, with $\varepsilon(s) \in [0, 1]$ for all $s \in \gS$. This minorization condition is not particularly interesting when $s \notin \gX$, as it simply implies that $P_{F}(s, B) \geq 0$. When $s \in \gX$ though, this allows us to interpret $P_{F}$ as a mixture of two Markov kernels, whose mixture weights depend on $\varepsilon(x)$:
\begin{equation}
    P_{F}(x, B) = \big(1 - \varepsilon(x)\big)R_{\nu}(x, B) + \varepsilon(x)\nu(B),
    \label{eq:decomposition-PF-mixture-kernels}
\end{equation}
and where $R_{\nu}(x, B)$ is a ``remainder'' Markov kernel, defined precisely in \cref{eq:remainder-kernel}; the minorization condition is necessary for this remainder kernel to exist. The important property to note is that the second kernel in this mixture, $\nu(B)$, is completely independent of $x$---only the mixture weight itself depends on $\varepsilon(x)$. We can interpret \cref{eq:decomposition-PF-mixture-kernels} as follows: we first select which kernel to apply, with probability $\varepsilon(x)$, and upon selection of the second kernel we ``reset'' the Markov chain with $\nu(B)$. Using the terminology of GFlowNets, and connecting with the Markov kernel $\bar{P}_{F}$ of \citet{lahlou2023continuousgfn}, we can view $\nu(B) \approx \bar{P}_{F}(s_{0}, B)$ as the probability of transitioning from the initial state $s_{0}$, and $\varepsilon(x) \approx \bar{P}_{F}(x, \{\bot\})$ the probability of terminating at $x \in \gX$.

\label{sec:splitting-technique}
\begin{figure}[t]
    \centering
    \begin{adjustbox}{center}
    \includegraphics[]{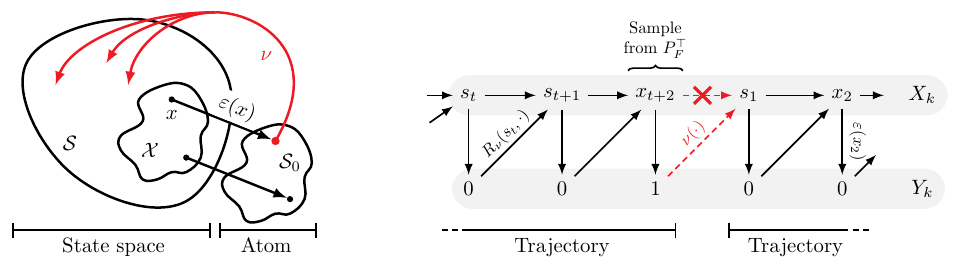}
    \end{adjustbox}
    \caption{Construction of an atom via the splitting technique. (Left) Illustration of the atom $\gS_{0} = \gX \times \{1\}$ created from $\gX$. Once the Markov chain reaches the atom $\gS_{0}$, it transitions to its next state with the kernel $\nu(\cdot)$, independent of $x$. (Right) The split chain $Z_{k} = (X_{k}, Y_{k})$, where $X_{k}$ are elements of $\gS$, and $Y_{k}$ is a binary variable indicating which kernel to choose next: if $Y_{k} = 0$, the remainder kernel $R_{\nu}(X_{k}, \cdot)$ is picked, otherwise it is $\nu(\cdot)$ indicating the ``end'' of a trajectory, with a corresponding sample from the terminating state probability $P_{F}^{\top}$.}
    \label{fig:splitting-technique}
\end{figure}

This suggests the construction of a \emph{split chain} $(Z_{k})_{k\geq 0}$, where each element can be broken down into $Z_{k} = (X_{k}, Y_{k})$, with $X_{k}$ being a state in $\gS$, and $Y_{k}$ being a binary variable indicating which of the two kernels in the mixture \cref{eq:decomposition-PF-mixture-kernels} to select at the next step. This is a Markov chain over the product space $(\gS', \Sigma') = \big(\gS \times \{0, 1\}, \Sigma \otimes \sigma(\{0, 1\})\big)$, with Markov kernel $P_{F}^{\mathrm{split}} = Q_{\nu} \otimes b_{\varepsilon}$, where
\begin{align}
    Q_{\nu}\big((x, y), B\big) &= \mathbbm{1}(y = 0)R_{\nu}(x, B) + \mathbbm{1}(y = 1)\nu(B) & B &\in \Sigma \label{eq:split-kernel-Q}\\
    b_{\varepsilon}(x, C) &= \big(1 - \varepsilon(x)\big)\delta_{0}(C) + \varepsilon(x)\delta_{1}(C) & C &\in \sigma\big(\{0, 1\}\big), \label{eq:split-kernel-b}
\end{align}
where $\delta_{y}$ is the Dirac measure at $y$. This construction is illustrated in \cref{fig:splitting-technique}. It is similar to splitting terminating states $x \in \gX$ in discrete GFlowNets into transitions $x \rightarrow x^{\top}$, where $x^{\top}$ has no children except $s_{f}$ \citep{malkin2022trajectorybalance}. It is easy to show that the set $\gX \times \{1\} \in \Sigma'$ is an atom of the split chain $(Z_{k})_{k \geq 0}$ \citep{nummelin1978splitting}. We will call this atom $\gS_{0} \triangleq \gX \times \{1\}$, by analogy with the initial state $s_{0}$ in discrete GFlowNets in \cref{sec:discrete-gflownet}, which plays a similar role.

\subsection{GFlowNets as recurrent Markov chains}
\label{sec:gflownets-recurrent-markov-chain-general}
Similar to \cref{sec:cyclical-discrete-gflownet}, we will define a GFlowNet in terms of a (Harris) recurrent Markov chain, this time over a general state space $\gS$. Instead of wrapping around the GFlowNet though to construct an atom $\{s_{0}\}$, we now use the atom $\gS_{0}$ created in \cref{sec:splitting-technique} via the splitting technique. Just like in \cref{def:terminating-state-probability} for discrete spaces, we can define the \emph{terminating state probability} over $\gS$ as the marginal distribution of the split chain returning to the atom $\gS_{0}$.

\begin{definition}[Terminating state probability]
    Let $P_{F}$ be a Harris recurrent kernel over $(\gS, \Sigma)$ such that $\gX$ satisfies the minorization condition in \cref{def:minorization-condition}. The \emph{terminating state probability distribution} is defined as the marginal distribution of the split chain returning to the atom $\gS_{0} = \gX \times \{1\}$. For all $B \in \Sigma_{\gX}$
    \begin{equation*}
        P_{F}^{\top}(B) \triangleq \E_{\gS_{0}}\big[\mathbbm{1}_{B}(X_{\sigma_{\gS_{0}}})\big] = \E_{\gS_{0}}\big[\mathbbm{1}_{B\times \{1\}}(Z_{\sigma_{\gS_{0}}})\big].
        \label{eq:terminating-state-probability-general}
    \end{equation*}
    \label{def:terminating-state-probability-general}
\end{definition}

Since $\gS_{0}$ is an atom, the kernel $\nu$ is guaranteed to be selected to transition from any state $z \in \gS_{0}$; this justifies our notation $\E_{\gS_{0}}[\cdot]$ to denote $\E_{z}[\cdot]$ for any $z \in \gS_{0}$. We show in \cref{prop:terminating-state-probability-proper-distribution-general} that $P_{F}^{\top}$ is again a properly defined probability distribution over $(\gX, \Sigma_{\gX})$, where $\Sigma_{\gX}$ is a trace $\sigma$-algebra of subsets of $\gX$. Note that in \cref{eq:terminating-state-probability-general}, we define a distribution $P_{F}^{\top}$ over $\gX$ as an expectation over the split chain $Z_{k} = (X_{k}, Y_{k})$. It is interesting to see that while the terminating state probability in discrete spaces involved the state of the Markov chain at time $\sigma_{s_{0}} - 1$ (\cref{def:terminating-state-probability}), the general case above does not have this offset by one. The reason is that in the split chain, we can treat $X_{k} \rightarrow Y_{k}$ as being a ``sub-transition'', which is enough to make up for the extra step required in the discrete case.

Since the Markov kernel $P_{F}$ is Harris recurrent, it admits an invariant measure $F$ satisfying \cref{eq:invariant-measure-general}. Just like for discrete spaces, we will require $F$ to also satisfy some \emph{boundary conditions} in order to obtain a terminating state probability $P_{F}^{\top}$ that matches some reward measure, up to normalization. For a positive measure $R$ over $\gX$, the boundary conditions take the form
\begin{equation}
    \forall B\in \Sigma_{\gX},\qquad R(B) = \int_{B}\varepsilon(x)F(dx),
    \label{eq:boundary-conditions-general}
\end{equation}
where $\varepsilon$ is the measurable function in the minorization condition. If $F$ is an invariant measure of $P_{F}$ that satisfies \cref{eq:boundary-conditions-general}, then we obtain a generalization of \cref{thm:invariant-terminating-state}.
\vspace*{2\parsep}
\begin{restatable}{ftheorem}{invariantterminatingstategeneral}
    Let $P_{F}$ be a Harris recurrent kernel over $(\gS, \Sigma)$ such that $\gX \in \Sigma$ satisfies the minorization condition in \cref{def:minorization-condition}. Moreover, assume that $P_{F}$ admits an invariant measure $F$ such that $\forall B \in \Sigma_{\gX}$,
    \begin{equation}
        R(B) = \int_{B}\varepsilon(x)F(dx),
    \end{equation}
    where $R$ is a finite measure on $\gX$ and $\varepsilon$ is the measurable function in \cref{def:minorization-condition}. Then the terminating state probability distribution is proportional to the measure $R$:
    \begin{equation*}
        \forall B \in \Sigma_{\gX}, \qquad P_{F}^{\top}(B) \propto R(B).
    \end{equation*}
    \label{thm:invariant-terminating-state-general}
\end{restatable}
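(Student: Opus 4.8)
The plan is to mirror the discrete proof of \cref{thm:invariant-terminating-state}, reducing everything to the essential uniqueness of the invariant measure, but now applied to the \emph{split chain} $(Z_k)_{k\geq 0}$ rather than to $P_F$ directly. The split chain is the natural object here because it carries the atom $\gS_0 = \gX\times\{1\}$, and by the Nummelin construction \citep{nummelin1978splitting} it inherits Harris recurrence (hence $\phi$-irreducibility and recurrence) from $P_F$; therefore $P_F^{\mathrm{split}}$ admits an invariant measure that is unique up to a multiplicative constant \citep{meyn1993markovchainsstability}. The strategy is to exhibit two invariant measures for $P_F^{\mathrm{split}}$ and read off their restrictions to the atom: one restricts to $R$, the other to $P_F^\top$, and uniqueness forces proportionality.

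First I would verify that the measure $\pi^*$ on $(\gS',\Sigma')$ defined on rectangles by $\pi^*(B\times\{y\}) = \int_B b_\varepsilon(x,\{y\})\,F(dx)$, i.e.\ $\pi^*(B\times\{0\}) = \int_B(1-\varepsilon(x))F(dx)$ and $\pi^*(B\times\{1\}) = \int_B\varepsilon(x)F(dx)$, is invariant for $P_F^{\mathrm{split}}$. Pushing $\pi^*$ through the kernel $Q_\nu\otimes b_\varepsilon$ and summing over $y\in\{0,1\}$, the $X$-marginal of the pushforward is $\int_\gS F(dx)\big[(1-\varepsilon(x))R_\nu(x,\cdot)+\varepsilon(x)\nu(\cdot)\big]$, which by the mixture decomposition \cref{eq:decomposition-PF-mixture-kernels} equals $\int_\gS F(dx)P_F(x,\cdot) = F(\cdot)$ using the invariance \cref{eq:invariant-measure-general} of $F$. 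Applying $b_\varepsilon$ to this marginal recovers $\pi^*$ exactly, so $\pi^*$ is invariant. Restricting to the atom and using the boundary condition \cref{eq:boundary-conditions-general} gives $\pi^*(B\times\{1\}) = \int_B\varepsilon(x)F(dx) = R(B)$ for every $B\in\Sigma_\gX$.

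Next I would connect $P_F^\top$ to the invariant measure through the regenerative structure of the atom. The unique invariant measure of a Harris recurrent chain possessing an accessible atom is, up to a constant, the occupation measure $\pi^{\mathrm{occ}}(A) = \E_{\gS_0}\big[\sum_{k=0}^{\sigma_{\gS_0}-1}\mathbbm{1}_A(Z_k)\big]$ \citep{meyn1993markovchainsstability}. Evaluating its invariance identity at $A = B\times\{1\}$, interchanging sum and expectation (Tonelli) and applying the tower property over $\mathcal{F}_k$ — using that $\{\sigma_{\gS_0}>k\}$ is $\mathcal{F}_k$-measurable — rewrites $\pi^{\mathrm{occ}}(B\times\{1\}) = \E_{\gS_0}\big[\sum_{j=1}^{\sigma_{\gS_0}}\mathbbm{1}_{B\times\{1\}}(Z_j)\big]$. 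Since $Y_k=1$ holds only on the atom and $\sigma_{\gS_0}$ is the first return time, every term with $1\leq j<\sigma_{\gS_0}$ vanishes, leaving only $j=\sigma_{\gS_0}$; as $Z_{\sigma_{\gS_0}}\in\gX\times\{1\}$ this equals $\E_{\gS_0}\big[\mathbbm{1}_B(X_{\sigma_{\gS_0}})\big] = P_F^\top(B)$ by \cref{def:terminating-state-probability-general}.

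Finally, uniqueness closes the argument: both $\pi^*$ and $\pi^{\mathrm{occ}}$ are invariant for the Harris recurrent split chain, so $\pi^* = c\,\pi^{\mathrm{occ}}$ for some constant $c>0$. Restricting to the atom yields $R(B) = c\,P_F^\top(B)$ for all $B\in\Sigma_\gX$, i.e.\ $P_F^\top(B)\propto R(B)$; evaluating at $B=\gX$ and using that $P_F^\top$ is a probability measure (\cref{prop:terminating-state-probability-proper-distribution-general}) identifies $c = R(\gX)$. I expect the main obstacle to be the measure-theoretic bookkeeping of the splitting construction — specifically, confirming that $P_F^{\mathrm{split}}$ is Harris recurrent with $\gS_0$ an accessible atom so that the uniqueness theorem genuinely applies, and handling the possibly infinite total mass of $F$ (and hence $\pi^*$), which is harmless here because only the finite restrictions to the atom enter the conclusion.
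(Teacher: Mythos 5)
Your proof is correct, and it takes a genuinely different route from the paper's. The paper never manipulates invariant measures of the split kernel: it invokes \cref{thm:unique-invariant-measure-general} (Nummelin's explicit cycle formula $\int_{\gS} f(s)\lambda(ds) = \E_{\gS_{0}}\big[\sum_{k=1}^{\sigma_{\gS_{0}}}f(X_{k})\big]$ for the \emph{base} chain), proves in \cref{lem:relation-unique-invariant-terminating-state-probability-general} that $\int_{B}\varepsilon(x)\lambda(dx) = P_{F}^{\top}(B)$ via the conditional-expectation identity $\varepsilon(X_{k}) = \E_{\gS_{0}}[\mathbbm{1}(Y_{k}{=}1)\mid Z_{0:k-1}, X_{k}]$, and then concludes from $F = \alpha\lambda$ and the boundary condition. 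You instead lift everything to $\gS\times\{0,1\}$ and exhibit two invariant measures for $P_{F}^{\mathrm{split}}$: the split measure $\pi^{*} = F\otimes b_{\varepsilon}$, whose invariance you correctly verify from \cref{eq:decomposition-PF-mixture-kernels}, and the atom occupation measure, whose restriction to $\gS_{0}$ you identify with $P_{F}^{\top}$; uniqueness on the enlarged space closes the argument. What your route buys: the $\varepsilon$-to-$Y$ bookkeeping that the paper does by hand inside two lemmas is absorbed into the invariance of $F\otimes b_{\varepsilon}$ together with the (almost-sure) observation that $Z_{j}\in\gX\times\{1\}$ only at $j=\sigma_{\gS_{0}}$; you also identify the constant $c = R(\gX)$, which the paper only does in the discrete case. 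What it costs: you must import that the split chain is Harris recurrent with $\gS_{0}$ an accessible atom and has an essentially unique invariant measure, which you cite appropriately from \citet{nummelin1978splitting} and \citet{meyn1993markovchainsstability}. One definitional wrinkle: since $\gS_{0}$ is not a singleton, your $\pi^{\mathrm{occ}}(A) = \E_{\gS_{0}}\big[\sum_{k=0}^{\sigma_{\gS_{0}}-1}\mathbbm{1}_{A}(Z_{k})\big]$ is ambiguous on sets meeting the atom, because the $k=0$ term depends on which $z\in\gS_{0}$ the chain starts from (the notation $\E_{\gS_{0}}$ is only licensed for functionals of $(Z_{k})_{k\geq 1}$); your one-step shift to $\E_{\gS_{0}}\big[\sum_{j=1}^{\sigma_{\gS_{0}}}\mathbbm{1}_{A}(Z_{j})\big]$, valid since $\{\sigma_{\gS_{0}}>k\}\in\mathcal{F}_{k}$, produces the well-defined form --- the one Nummelin and \cref{thm:unique-invariant-measure-general} take as the definition --- so start from that form and the wrinkle disappears.
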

The proof of this theorem, available in \cref{app:general-spaces}, relies once again on the unicity of the invariant measure of a Harris recurrent Markov chain, up to normalization.

\subsection{Construction of Harris recurrent chains}
\label{sec:constructon-harris-chains}
Since Harris recurrence is essential to guarantee that a GFlowNet does induce a terminating state probability distribution proportional to $R$, we must find ways to construct Markov kernels satisfying this property. One possibility, heavily inspired by the pointed DAG structure of discrete GFlowNets \citep{bengio2021gflownetfoundations}, is to enforce Harris recurrence through the structure of the state space. For example, the following proposition shows that any Markov chain defined on a finitely absorbing measurable pointed graph \citep{lahlou2023continuousgfn} is necessarily Harris recurrent.

\begin{restatable}{proposition}{finitelyabsorbingharris}
    Let $G$ be a finitely absorbing measurable pointed graph as defined by \citet{lahlou2023continuousgfn}, with $\kappa$ its reference transition kernel. Suppose that we identify the source state $s_{0}$ of $G$ with its sink state $\bot \equiv s_{0}$, as in \cref{sec:cyclical-discrete-gflownet}. Then any $\phi$-irreducible Markov kernel $P_{F}$ absolutely continuous wrt. $\kappa$ (in the sense that $\forall s\in \gS,\ P_{F}(s, \cdot) \ll \kappa(s, \cdot)$) is Harris recurrent.
    \label{prop:finitely-absorbing-harris}
\end{restatable}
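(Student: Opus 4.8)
The plan is to leverage the atom $\{s_0\}$ created by the wrapping-around $\bot \equiv s_0$, combining the bounded trajectory length coming from finite absorption with the i.i.d.\ excursion structure granted by the strong Markov property at this atom, and then to close the argument with a second Borel--Cantelli computation.

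First I would transfer the finitely-absorbing property from the reference kernel $\kappa$ to $P_F$. Finite absorption provides an integer $N$ such that every directed path in $G$ from $s_0$ reaches $\bot$ within $N$ steps. Since $P_F(s, \cdot) \ll \kappa(s, \cdot)$ for every $s$, any transition taken with positive probability under $P_F$ is admissible under $\kappa$, so every $P_F$-trajectory is a $\kappa$-trajectory and inherits the same length bound. After the identification $\bot \equiv s_0$, this is exactly $\sP_s(\sigma_{s_0} \leq N) = 1$ for every state $s$: from $s_0$ itself, and, since pointedness places every state on a path of length $\leq N$ from $s_0$ to $\bot$, from any $s \in \gS$. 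In particular the chain returns to $s_0$ infinitely often, almost surely.

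Next I would record that $\{s_0\}$ is an atom, the transition out of the single state $s_0$ being the fixed measure $\nu = P_F(s_0, \cdot)$. By the strong Markov property applied at the successive (a.s.\ finite) return times $0 = T_0 < T_1 < T_2 < \cdots$ to $s_0$, the excursions $E_j = (X_{T_{j-1}}, \ldots, X_{T_j - 1})$ are i.i.d. Now fix $B \in \Sigma$ with $\phi(B) > 0$. By $\phi$-irreducibility $B$ is accessible from $s_0$, so $P_F^m(s_0, B) > 0$ for some $m$. Since the return times partition the time axis and $\{X_m \in B\}$ forces the excursion containing time $m$ to visit $B$, we get $\sP_{s_0}\big(\bigcup_j A_j\big) \geq P_F^m(s_0, B) > 0$, where $A_j$ is the event that $E_j$ visits $B$. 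As the excursions are i.i.d., all $A_j$ share a common probability $p$, which must therefore be strictly positive. The $A_j$ are independent with $\sum_j \sP(A_j) = \infty$, so by the second Borel--Cantelli lemma $B$ is visited in infinitely many excursions almost surely. Starting from any $s$ the chain first reaches $s_0$ in at most $N$ steps and then inherits this infinitely-often property, whence $\sP_s(\sigma_B < \infty) = 1$ for every $s \in B$, which is precisely Harris recurrence.

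I expect the main obstacle to be the first step: making rigorous the transfer of the bounded-absorption-time property from $\kappa$ to $P_F$ through absolute continuity, and arguing that the deterministic bound $N$ holds uniformly from \emph{every} state after wrapping. A secondary subtlety is the measure-theoretic justification that $p > 0$, for which one may alternatively cut an accessing path at its last visit to $s_0$ so that $B$ is reached within a single excursion. Once the bounded return time and the positivity $p > 0$ are in hand, the i.i.d.-excursion and Borel--Cantelli conclusion is routine, and the unicity of the invariant measure (used elsewhere) plays no role here.
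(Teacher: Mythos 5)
Your proposal is correct, and its first half coincides with the paper's proof: both transfer the finitely-absorbing bound from the reference kernel to $P_{F}$ via absolute continuity of the iterated kernels ($P_{F}^{N}(s_{0},\cdot)$ is supported on $\{\bot\}$ because $\kappa^{N}(s_{0},\cdot)$ is), both use the identification $\bot \equiv s_{0}$ so that the chain surely reaches $s_{0}$ within $N$ steps from any state, and both invoke $\phi$-irreducibility to obtain $P_{F}^{n}(s_{0}, B) > 0$ for some $n$. Where you genuinely diverge is the closing step. The paper combines the Chapman--Kolmogorov factorization $P_{F}^{n+m}(s, B) = P_{F}^{m}(s, \{\bot\})\,P_{F}^{n}(s_{0}, B) > 0$ with a minimality-of-$N$ claim ($P_{F}^{n}(s, B) = 0$ for all $n > N$) to assert directly that $\sP_{s}(\sigma_{B} \leq N) = 1$ for $s \in B$; you instead observe that $\{s_{0}\}$ is an accessible atom with i.i.d.\ excursions, show each excursion visits $B$ with a common probability $p > 0$, and close with the second Borel--Cantelli lemma to get $\sP_{s}(\sigma_{B} < \infty) = 1$. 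Your route is the more careful one: the paper's final display is stronger than Harris recurrence requires and fails in simple examples after wrapping --- take $\gX = \{a, b\}$ with $B = \{a\}$ and $P_{F}(s_{0}, \cdot)$ splitting mass equally between $a$ and $b$; then the return time to $B$ from $a$ is geometric and unbounded, and $P_{F}^{n}(s, B)$ does not vanish for $n > N$ once the chain regenerates at $s_{0}$, since only the return to $s_{0}$ (not to an arbitrary accessible $B$) is bounded by $N$. What the regeneration-plus-Borel--Cantelli argument buys you is that the probability-one conclusion is actually derived rather than asserted; what it costs is a little length, together with the bookkeeping you already flagged (handling the initial segment from $s \neq s_{0}$ by the strong Markov property at the first hitting time of $s_{0}$, and the positivity $p > 0$ from $\sP_{s_{0}}(\bigcup_{j} A_{j}) > 0$ with equidistributed $A_{j}$). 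The step you identified as the main obstacle, transferring the bound from $\kappa$ to $P_{F}$, is routine by induction on $P_{F}^{n}(s, \cdot) \ll \kappa^{n}(s, \cdot)$ and is treated at exactly the same level of detail in the paper.
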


The proof of this proposition is available in \cref{app:general-spaces}. It is important to note that in the case of measurable pointed graphs, the structure of the state space as defined by its reference transition kernel $\kappa$ is not sufficient to conclude the Harris recurrence of the chain (hence, to guarantee the fundamental theorem of GFlowNets in \cref{thm:invariant-terminating-state-general}), and that its ``finitely absorbing'' nature is essential in \citet{lahlou2023continuousgfn}. Examples of such chains include cases where the generation of an object is done in a fixed number of steps \citep{zhang2023unifying}.

However, Harris recurrence is a more general notion that goes beyond the structure of the state space as in \citet{lahlou2023continuousgfn}. For example, if $\gS \equiv \gX$, we can ensure that a $\phi$-irreducible Markov chain satisfying the minorization condition of \cref{def:minorization-condition} is Harris recurrent by enforcing $\varepsilon(s) \geq b$, for some fixed $b \in (0, 1]$ \citep{nummelin1978splitting}. In other words, this means that the GFlowNet terminates with probability at least $b > 0$ at each step of the generation.

\section{Comparison with Markov chain Monte Carlo methods}
\label{sec:comaprison-mcmc}
GFlowNets and Markov chain Monte Carlo methods (MCMC) were both introduced to solve a similar problem: sampling from a probability distribution that is defined up to a normalization constant. Applications include energy based models, where a Boltzmann distribution is defined up to its (intractable) partition function, and Bayesian inference where the posterior distribution is defined proportionally to the joint distribution, with the intractable evidence being the normalization constant. One of the main advantages of viewing GFlowNets from the perspective of Markov chains is that it places them under the same theoretical framework as MCMC methods, highlighting the similarities and differences between both methods. These differences are summarized in \cref{tab:comaprison-gflownet-mcmc}.

Recall that the goal of MCMC methods is to construct a Markov chain so that its invariant distribution matches the target distribution defined up to normalization constant. Samples from the distribution are then obtained by running the Markov chain until convergence to the invariant distribution, which typically requires to run the Markov chain for a long (burn-in) period. The convergence of iterates $\{P^{n}(s_{0}, \cdot)\}_{n \geq 0}$ to the invariant distribution is guaranteed by the ergodicity of the Markov chain (i.e. positive recurrent, \emph{and} aperiodic). By contrast, the Markov chain of a GFlowNet is only required to be positive recurrent (\cref{sec:cyclical-discrete-gflownet}) to guarantee the existence of an invariant distribution $F$, but no guarantee on the convergence to $F$ is necessary. Moreover, while the Markov kernel $P_{F}$ of MCMC methods needs to be carefully built to ensure that the invariant distribution matches the target distribution, the invariant distribution of the Markov kernel in a GFlowNet may be arbitrary, as long as it matches the boundary condition \cref{eq:boundary-condition}; there may be multiple Markov kernels with different invariant distributions yielding the same terminating state probability. This could explain why the Markov kernels in MCMC methods are typically handcrafted (e.g., Metropolis-Hastings, based on a proposal kernel), as opposed to GFlowNets where $P_{F}$ is learned (e.g., with a neural network).

\begin{table}[t]
    \centering
    \caption{Comparison between MCMC methods and GFlowNets, both defined on a discrete state space $\gS$ (for simplicity) with a positive recurrent Markov kernel $P_{F}$, with invariant measure/distribution $F$. In both cases, the objective is to approximate a target distribution $\propto R(x)$. ${}^{\dagger}$There exists MCMC methods augmenting the state space.}
    \label{tab:comaprison-gflownet-mcmc}
    \begin{tabular}{lcccc}
        \toprule
         & Sample & Conditions & Target distribution & \multirow{2}{*}{Sampling}\\
         & space & \footnotesize{(+ positive rec.)} & $\propto R(x)$ & \\
        \midrule
        \multirow{2}{*}{MCMC} & \multirow{2}{*}{$\gS^{\dagger}$} & Aperiodicity & $ = F(x)$ & Asymptotic \\
        & & \footnotesize{(convergence)} & \footnotesize{(invariant dist.)} & \footnotesize{(correlated)} \\[0.5em]
        \multirow{2}{*}{GFlowNet} & \multirow{2}{*}{$\gX \subseteq \gS$} & Boundary cond. & $ = P_{F}^{\top}(x)$ & Finite time \\
        & & \footnotesize{(marginal)} & \footnotesize{(terminating state dist.)} & \footnotesize{(independent)} \\
        \bottomrule
    \end{tabular}
\end{table}

Probably the main difference between GFlowNets and MCMC methods is the relation between the invariant measure/distribution of the Markov chain and the target distribution: while MCMC requires the invariant distribution to match the target distribution, GFlowNets only require the \emph{marginal distribution} of the Markov chain (the terminating state probability distribution; \cref{def:terminating-state-probability}) to match the target distribution. The state space of the Markov chain in MCMC therefore corresponds to the sample space of the target distribution, and as a consequence these chains are known to poorly mix (leading to slow convergence to the invariant distribution) in the presence of multiple modes. The Markov chain of a GFlowNet, on the other hand, is constructed on an augmented state space $\gS$, broader than the sample space $\gX \subseteq \gS$, allowing the chain to use these intermediate steps to move between modes more easily. It is worth noting that there exists some MCMC methods, such as \emph{Hamiltionian Monte Carlo} methods (HMC; \citealp{neal1993probabilistic,mackay2003information}), where the target distribution is the marginal of the invariant distribution over an augmented space.

Finally, since MCMC methods rely on the convergence to the invariant distribution, samples of the target distribution are only guaranteed asymptotically. Moreover, consecutive samples are correlated by the Markov kernel $P_{F}$, and additional post-processing techniques are required to reduce the effect of this cross-correlation between samples. On the other hand, samples from a GFlowNet are obtained in finite time due to the positive recurrence (or Harris recurrence in general state spaces) of the Markov chain, and are guaranteed to be independent from one another thanks to the strong Markov property.
\newpage
\section*{Acknowledgments}
\label{sec:acknowledgments}
We would like to thank Nikolay Malkin, Salem Lahlou, Pablo Lemos, and Dinghuai Zhang for the useful discussions and feedback about this paper.

\bibliographystyle{plainnat}
\bibliography{references}

\newpage
\appendix
\numberwithin{equation}{section}
\numberwithin{figure}{section}
\numberwithin{table}{section}

{\LARGE \textbf{Appendix}}

\section{Existence of an invariant measure}
\label{app:existence-invariant-measure}
In this section, we recall some standard results about the existence of an invariant measure for Markov chains, first over a discrete state space, and then over general state spaces. For further fundamentals about Markov chains, we recommend the book \citep{douc2018markovchains}. We start by stating the existence of an invariant measure for irreducible and positive recurrent Markov chains over a discrete state space $\gS$, used to show the fundamental theorem of GFlowNets in \cref{thm:invariant-terminating-state}.

\begin{theorem}
    Let $P_{F}$ be an irreducible and positive recurrent Markov kernel over $\gS$. Then there exists a non-trivial invariant measure $\lambda$ for $P_{F}$ (i.e., $\lambda P_{F} = \lambda$), unique up to a multiplicative positive constant, defined for all $s\in\gS$ by:
    \begin{equation*}
        \lambda(s) = \E_{s_{0}}\!\left[\sum_{k=0}^{\sigma_{s_{0}}-1}\mathbbm{1}(X_{k} = s)\right] = \sum_{k=0}^{\infty}\E_{s_{0}}\big[\mathbbm{1}(k < \sigma_{s_{0}})\mathbbm{1}(X_{k}=s)\big]
    \end{equation*}
    In particular, $\lambda$ is the unique invariant measure of $P_{F}$ such that $\lambda(s_{0}) = 1$.
    \label{thm:unique-invariant-measure}
\end{theorem}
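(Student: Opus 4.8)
The plan is to verify that the candidate measure $\lambda$, which records the expected number of visits to a state during a single excursion away from $s_{0}$, is both invariant and---up to a multiplicative constant---the only invariant measure. I would proceed in four stages: first check that $\lambda$ is well defined and that $\lambda(s_{0})=1$; second establish the invariance identity $\lambda P_{F}=\lambda$; third confirm finiteness using positive recurrence; and fourth prove uniqueness by dominating an arbitrary invariant measure and then pinning the constant via irreducibility. The equality of the two expressions for $\lambda(s)$ in the statement is immediate from Tonelli's theorem, since all summands are nonnegative: swapping $\E_{s_{0}}$ with $\sum_{k}$ turns the random upper limit $\sigma_{s_{0}}-1$ into the indicator $\mathbbm{1}(k<\sigma_{s_{0}})$. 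The value $\lambda(s_{0})=1$ follows because, over the index range $0\leq k\leq \sigma_{s_{0}}-1$, the chain started at $X_{0}=s_{0}$ occupies $s_{0}$ exactly once (at $k=0$), as $\sigma_{s_{0}}\geq 1$ is the first return.

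For invariance, the key device is a reindexing that exploits $X_{0}=X_{\sigma_{s_{0}}}=s_{0}$: the expected number of visits to $s'$ over $0\leq k\leq \sigma_{s_{0}}-1$ equals the expected number over $1\leq k\leq \sigma_{s_{0}}$, because both ranges include the single visit to $s_{0}$ exactly once and agree elsewhere. Writing
\[
\lambda(s')=\sum_{k=1}^{\infty}\sP_{s_{0}}(X_{k}=s',\ k\leq \sigma_{s_{0}}),
\]
I would then condition on $X_{k-1}$. The event $\{k\leq\sigma_{s_{0}}\}=\{\sigma_{s_{0}}>k-1\}$ depends only on $X_{0},\ldots,X_{k-1}$, so the Markov property gives
\[
\sP_{s_{0}}(X_{k}=s',\ k\leq\sigma_{s_{0}})=\sum_{s\in\gS}\sP_{s_{0}}(X_{k-1}=s,\ k-1<\sigma_{s_{0}})\,P_{F}(s,s').
\]
Summing over $k\geq 1$ and interchanging the (nonnegative) sums by Tonelli recovers $\sum_{s}\lambda(s)P_{F}(s,s')=\lambda(s')$, which is exactly $\lambda P_{F}=\lambda$.

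Finiteness and normalization come for free from positive recurrence: summing $\lambda$ over all states and using the first representation gives $\sum_{s\in\gS}\lambda(s)=\E_{s_{0}}[\sigma_{s_{0}}]$, which is finite precisely because the chain is positive recurrent; hence each $\lambda(s)$ is finite and $\lambda$ is a genuine non-trivial invariant measure with $\lambda(s_{0})=1$.

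For uniqueness, let $\mu$ be any invariant measure; after normalizing so that $\mu(s_{0})=1$, I would iterate the relation $\mu=\mu P_{F}$ while isolating the contribution of $s_{0}$ at each step, so that after $n$ expansions $\mu(s')$ dominates the partial sum $\sum_{k=0}^{n}\sP_{s_{0}}(X_{k}=s',\ k<\sigma_{s_{0}})$; letting $n\to\infty$ yields $\mu\geq\lambda$ pointwise. The difference $\mu-\lambda$ is then a nonnegative invariant measure vanishing at $s_{0}$, and from $0=(\mu-\lambda)(s_{0})=\sum_{s}(\mu-\lambda)(s)P_{F}^{n}(s,s_{0})$ together with irreducibility (which guarantees $P_{F}^{n}(s,s_{0})>0$ for some $n$) I would conclude $(\mu-\lambda)(s)=0$ for every $s$, so $\mu=\lambda$. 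I expect the uniqueness step---specifically the monotone iteration establishing the domination $\mu\geq\lambda$ and then using irreducibility in the correct direction to force equality---to be the main obstacle; the invariance computation, while requiring care with the measurability of $\{k\leq\sigma_{s_{0}}\}$ and the reindexing, is otherwise a routine application of the Markov property and Tonelli's theorem.
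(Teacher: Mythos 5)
Your proposal is correct, and it is precisely the classical argument: the paper itself gives no in-text proof of this theorem, deferring entirely to \citet[Theorem 7.2.1]{douc2018markovchains}, whose proof proceeds exactly as you outline (the excursion occupation measure, the cycle shift exploiting $X_{0}=X_{\sigma_{s_{0}}}=s_{0}$, the Markov property with Tonelli for invariance, and domination of an arbitrary invariant measure followed by irreducibility for uniqueness). The only point worth making explicit in a full write-up is that irreducibility with $\mu(s_{0})=1$ forces $\mu(s)<\infty$ for every $s$, so that $\mu-\lambda$ is a well-defined nonnegative invariant measure before you apply your final irreducibility argument.
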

\begin{proof}
    See \citep[][Theorem 7.2.1]{douc2018markovchains}.
\end{proof}

When $(\gS, \Sigma)$ is a general measurable state space, we first need to introduce the notion of $\phi$-irreducibility, characterizing the sets that are accessible by the Markov chain using a measure~$\phi$.

\begin{definition}[$\phi$-irreducibility]
    Let $\phi$ be a measure over a measurable space $(\gS, \Sigma)$. A Markov kernel $P_{F}$ is said to be \emph{$\phi$-irreducible} if any set $B \in \Sigma$ such that $\phi(B) > 0$ is accessible, in the sense that for all $s\in\gS$
    \begin{equation}
        \sP_{s}(\tau_{B} < \infty) > 0,
    \end{equation}
    where $\tau_{B} = \inf \{n \geq 0\mid X_{k} \in B\}$ is the hitting time of the Markov chain to $B$.
    \label{def:phi-irreducibility}
\end{definition}

We can extend \cref{thm:unique-invariant-measure} guaranteeing the existence of an invariant measure to Harris recurrent kernels satisfying the minorization condition. Although invariant measures may also exist for Markov kernels under weaker assumptions, here we use the assumptions that match the conditions necessary for GFlowNets over measurable state spaces to exist.

\begin{theorem}
    Let $P_{F}$ be a Harris recurrent kernel over $\gS$ such that $\gX$ satisfies the minorization condition in \cref{def:minorization-condition}. Then there exists a non-trivial invariant measure $\lambda$ for $P_{F}$ (i.e., $\lambda P_{F} = \lambda$), unique up to a multiplicative positive constant, defined for all bounded measurable functions $f: \gS \rightarrow \sR$ by
    \begin{equation*}
        \int_{\gS}f(s)\lambda(ds) = \E_{\gS_{0}}\Bigg[\sum_{k=1}^{\sigma_{\gS_{0}}}f(X_{k})\Bigg] = \sum_{k=1}^{\infty}\E_{\gS_{0}}\big[\mathbbm{1}(k \leq \sigma_{\gS_{0}})f(X_{k})\big].
    \end{equation*}
    Moreover, $\lambda$ is the unique measure such that $\int_{\gS}\varepsilon(s)\lambda(ds) = 1$, where $\varepsilon$ is the measurable function in the minorization condition.
    \label{thm:unique-invariant-measure-general}
\end{theorem}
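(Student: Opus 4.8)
The plan is to reduce the statement to the atomic situation already handled in the discrete case (\cref{thm:unique-invariant-measure}), by working with the split chain $(Z_{k})_{k\geq 0}$, $Z_{k}=(X_{k},Y_{k})$, built via the splitting technique with kernel $P_{F}^{\mathrm{split}}=Q_{\nu}\otimes b_{\varepsilon}$ (\cref{eq:split-kernel-Q,eq:split-kernel-b}). The set $\gS_{0}=\gX\times\{1\}$ is an accessible atom of this chain, and since an atom plays exactly the role that the single recurrent state $\{s_{0}\}$ played in \cref{thm:unique-invariant-measure}, the construction there transfers almost verbatim. The candidate invariant measure is the occupation measure of one excursion away from the atom, $\lambda_{\mathrm{split}}(C)=\E_{\gS_{0}}\big[\sum_{k=1}^{\sigma_{\gS_{0}}}\mathbbm{1}_{C}(Z_{k})\big]$ for $C\in\Sigma'$, and $\lambda$ will be its $\gS$-marginal, $\lambda(B)=\lambda_{\mathrm{split}}(B\times\{0,1\})$.

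First I would transfer recurrence to the split chain: since $P_{F}$ is Harris recurrent and $\phi(\gX)>0$, the atom $\gS_{0}$ is recurrent for $(Z_{k})$, so $\sP_{\gS_{0}}(\sigma_{\gS_{0}}<\infty)=1$ and every excursion is almost surely finite \citep{nummelin1978splitting,douc2018markovchains}. This makes $\lambda_{\mathrm{split}}$ well defined, and the standard atomic argument---cutting the trajectory at the stopping time $\sigma_{\gS_{0}}$ via the strong Markov property and reindexing the sum---shows that $\lambda_{\mathrm{split}}$ is $\sigma$-finite and invariant for $P_{F}^{\mathrm{split}}$, exactly as in the proof of \cref{thm:unique-invariant-measure}.

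Next I would descend from the split chain back to $P_{F}$. The key observation is that any measure invariant for $P_{F}^{\mathrm{split}}$ must factor as $\pi(dx,dy)=\pi_{X}(dx)\,b_{\varepsilon}(x,dy)$, because the final sub-transition of the split kernel always draws $Y$ from $b_{\varepsilon}(\cdot)$ in \cref{eq:split-kernel-b}. Marginalizing \cref{eq:split-kernel-Q} over $y\sim b_{\varepsilon}(x,\cdot)$ reproduces precisely the mixture \cref{eq:decomposition-PF-mixture-kernels}, namely $\int b_{\varepsilon}(x,dy)\,Q_{\nu}((x,y),B)=P_{F}(x,B)$; combined with the factorization this yields $\lambda P_{F}=\lambda$. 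Because the test functions $f$ in the statement depend only on the $\gS$-coordinate, one has $f(Z_{k})=f(X_{k})$, so the occupation-measure formula for $\lambda_{\mathrm{split}}$ collapses to the claimed $\int_{\gS}f\,d\lambda=\E_{\gS_{0}}\big[\sum_{k=1}^{\sigma_{\gS_{0}}}f(X_{k})\big]$.

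Normalization then follows cleanly: the factorization gives $\int_{\gS}\varepsilon\,d\lambda=\int_{\gX}b_{\varepsilon}(x,\{1\})\,\lambda(dx)=\lambda_{\mathrm{split}}(\gS_{0})$, and a single excursion visits the atom exactly once (at $k=\sigma_{\gS_{0}}$), so $\lambda_{\mathrm{split}}(\gS_{0})=1$, which pins the multiplicative constant. Uniqueness up to a constant follows by lifting in the opposite direction: any $P_{F}$-invariant measure $\lambda'$ lifts to $\lambda'(dx)\,b_{\varepsilon}(x,dy)$, which is invariant for $P_{F}^{\mathrm{split}}$, and the atomic uniqueness of \cref{thm:unique-invariant-measure} applied at $\gS_{0}$ forces all invariant measures of the split chain---hence their $\gS$-marginals---to be proportional. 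I expect the main obstacle to be this two-way translation between $P_{F}$ and its split chain: one must verify that projection and lifting are mutually inverse at the level of invariant measures, and that Harris recurrence of $P_{F}$ genuinely makes the atom $\gS_{0}$ recurrent, so that the occupation measure is finite on excursions yet globally $\sigma$-finite. Once the atom is correctly in place, the remainder mirrors the discrete argument step for step.
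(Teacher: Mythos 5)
Your proposal is correct, and it rests on the same underlying machinery as the paper---the Nummelin split chain and regeneration at the atom $\gS_{0}=\gX\times\{1\}$---but it is organized quite differently. The paper's proof simply cites \citet[Theorem~3]{nummelin1978splitting} for the existence, occupation-measure form, and uniqueness of $\lambda$, and devotes all of its effort to the normalization $\int_{\gS}\varepsilon(s)\lambda(ds)=1$, which it establishes term by term via the identity $\E_{\gS_{0}}\big[\mathbbm{1}(k\leq\sigma_{\gS_{0}})\varepsilon(X_{k})\big]=\E_{\gS_{0}}\big[\mathbbm{1}(\sigma_{\gS_{0}}=k)\big]$, using the Markov property of $(Z_{k})_{k\geq0}$ and the reading of $\varepsilon(x)$ as a termination probability. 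You instead re-derive the cited content: you construct $\lambda$ as the $\gS$-marginal of the excursion occupation measure $\lambda_{\mathrm{split}}$, and your factorization lemma---every $P_{F}^{\mathrm{split}}$-invariant measure has the form $\pi_{X}(dx)\,b_{\varepsilon}(x,dy)$ because the second coordinate is always refreshed from \cref{eq:split-kernel-b}, while averaging $Q_{\nu}$ over $b_{\varepsilon}$ recovers $P_{F}$ via \cref{eq:decomposition-PF-mixture-kernels}---yields a clean two-way bijection between invariant measures of $P_{F}$ and of the split chain, from which uniqueness transfers; both directions of that bijection check out. Your normalization step ($\int_{\gS}\varepsilon\,d\lambda=\lambda_{\mathrm{split}}(\gS_{0})=1$, since each excursion visits the atom exactly once, at $k=\sigma_{\gS_{0}}$, almost surely by Harris recurrence) is in fact the paper's computation in disguise: summing the paper's identity over $k\geq1$ is precisely evaluating $\lambda_{\mathrm{split}}(\gS_{0})$. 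What each approach buys: the paper gets brevity and rigor by citation; you get a self-contained and arguably more transparent account, at the cost of still importing two nontrivial facts that you correctly flag but do not prove---that Harris recurrence of $P_{F}$ transfers to the split chain (so that $\sP_{\gS_{0}}(\sigma_{\gS_{0}}<\infty)=1$ and $\lambda_{\mathrm{split}}$ is $\sigma$-finite) and the atomic analogue of \cref{thm:unique-invariant-measure} on a general state space---which are exactly the content of the result the paper cites, so your citation burden is relocated rather than removed.
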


\begin{proof}
    The existence and the form of the invariant measure are given in \citep[][Theorem 3]{nummelin1978splitting}. It is therefore sufficient to prove that $\int_{\gS}\varepsilon(s)\lambda(ds) = 1$. We first show that $\forall k \geq 1$:
    \begingroup
    \allowdisplaybreaks
    \begin{align}
        \E_{\gS_{0}}\big[\mathbbm{1}(k \leq \sigma_{\gS_{0}})\varepsilon(X_{k})\big] &= \E_{\gS_{0}}\big[\mathbbm{1}(k \leq \sigma_{\gS_{0}})\E_{Z_{k-1}}[\mathbbm{1}(Y_{1} = 1)\mid X_{1}]\big]\label{eq:proof-unique-invariant-measure-1}\\
        &= \E_{\gS_{0}}\big[\mathbbm{1}(k \leq \sigma_{\gS_{0}})\E_{\gS_{0}}[\mathbbm{1}(Y_{k}=1)\mid Z_{0:k-1},X_{k}]\big]\label{eq:proof-unique-invariant-measure-2}\\
        &= \E_{\gS_{0}}\big[\mathbbm{1}(k \leq \sigma_{\gS_{0}})\mathbbm{1}(Y_{k}=1)\big]\label{eq:proof-unique-invariant-measure-3}\\
        &= \E_{\gS_{0}}\big[\mathbbm{1}(\sigma_{\gS_{0}} = k)\big]\label{eq:proof-unique-invariant-measure-4}
    \end{align}
    \endgroup
    where we used the interpretation of $\varepsilon(x)$ as the probability of terminating at $x$ (see \cref{sec:splitting-technique}) in \cref{eq:proof-unique-invariant-measure-1}, the Markov property of the chain $(Z_{k})_{k\geq 0}$ in \cref{eq:proof-unique-invariant-measure-2}, the law of total expectation in \cref{eq:proof-unique-invariant-measure-3}, and finally the definition of $\sigma_{\gS_{0}}$ as being the return time to $\gS_{0}$, where we necessarily have $Y = 1$ in \cref{eq:proof-unique-invariant-measure-4}. Therefore, using the for of the invariant measure $\lambda$, and since $\varepsilon$ is a measurable function, we have
    \begin{equation}
        \int_{\gS}\varepsilon(s)\lambda(ds) = \sum_{k=1}^{\infty}\E_{\gS_{0}}\big[\mathbbm{1}(k \leq \sigma_{\gS_{0}})\varepsilon(X_{k})\big] = \sum_{k=1}^{\infty}\E_{\gS_{0}}\big[\mathbbm{1}(\sigma_{\gS_{0}} = k)\big] = 1,
    \end{equation}
    where we were able to conclude since the chain is Harris recurrent (hence it returns to $\gS_{0}$ with probability $1$; \cref{def:harris-recurrence}).
\end{proof}

If the Markov kernel $P_{F}$ satsifies the minorization condition in \cref{def:minorization-condition}, then we showed in \cref{sec:splitting-technique} that we can write $P_{F}$ as a mixture of two Markov kernels, one being $\nu(B)$ being independent of the current state, and the other being a \emph{remainder kernel} $R_{\nu}(x, B)$, defined by
\begin{equation}
    R_{\nu}(x, B) = \mathbbm{1}(\varepsilon(x) < 1)\frac{P_{F}(x, B) - \varepsilon(x)\nu(B)}{1 - \varepsilon(x)} + \mathbbm{1}(\varepsilon(x) = 1)\nu(B)
    \label{eq:remainder-kernel}
\end{equation}
One can easily show that this is a Markov kernel, and can be defined thanks to the minorization condition $P_{F}(x, B) \geq \varepsilon(x)\nu(B)$.

\section{Counter-example for positive recurrence}
\label{app:counter-example-positive-recurrence}
Let state space $\gS = \sN$ be the space of non-negative integers, and $P_{F}$ the transition probability distribution defined as $\forall n \in \sN$:
\begin{align}
    P_{F}(n, n + 1) &= \exp\left[-\frac{1}{(n+1)^{2}}\right] &&& P_{F}(n, 0) &= 1 - \exp\left[-\frac{1}{(n+1)^{2}}\right],
\end{align}
with all other transitions having probability $0$. Then we have for all $n \geq 1$
\begin{align}
    \E_{0}\big[\mathbbm{1}(\sigma_{0} = n)\big] &= P_{F}(n, 0)\prod_{k=0}^{n-1}P_{F}(k, k+1) = \left(1 - \exp\left[-\frac{1}{(n+1)^{2}}\right]\right)\exp\left[-\sum_{k=0}^{n-1}\frac{1}{(k+1)^{2}}\right]\nonumber\\
    &= \exp\left[-\sum_{k=0}^{n-1}\frac{1}{(k+1)^{2}}\right] - \exp\left[-\sum_{k=0}^{n}\frac{1}{(k+1)^{2}}\right]
\end{align}
Therefore, the probability of returning to the state $0$ in finite time is
\begin{equation}
    \sP_{0}(\sigma_{0} < \infty) = \sum_{n=1}^{\infty}\E_{0}\big[\mathbbm{1}(\sigma_{0} = n)\big] = 1 - \exp\left[-\sum_{k=0}^{\infty}\frac{1}{(k+1)^{2}}\right] = 1 - \exp\left(-\frac{\pi^{2}}{6}\right) < 1
\end{equation}

\section{Proofs}
\label{app:proofs}

\subsection{Discrete spaces}
\label{app:discrete-spaces}

\begin{lemma}
    The terminating state probability distribution $P_{F}^{\top}$ is related to the invariant measure $\lambda$ defined in \cref{thm:unique-invariant-measure} by, $\forall x\in \gX$, $P_{F}^{\top}(x) = \lambda(x)P_{F}(x, s_{0})$.
    \label{lem:relation-unique-invariant-terminating-state-probability}
\end{lemma}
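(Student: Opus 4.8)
The plan is to unfold the definition of $P_{F}^{\top}$ from \cref{def:terminating-state-probability} and partition the expectation according to the value of the return time $\sigma_{s_{0}}$. Because the chain is positive recurrent, $\sigma_{s_{0}} < \infty$ almost surely under $\sP_{s_{0}}$, so the events $\{\sigma_{s_{0}} = n\}$ for $n \geq 1$ partition the sample space and we may write
\begin{equation*}
    P_{F}^{\top}(x) = \E_{s_{0}}\big[\mathbbm{1}(X_{\sigma_{s_{0}}-1} = x)\big] = \sum_{n=1}^{\infty}\sP_{s_{0}}\big(\sigma_{s_{0}} = n,\, X_{n-1} = x\big).
\end{equation*}

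First I would rewrite each event explicitly as $\{X_{1} \neq s_{0}, \ldots, X_{n-1} \neq s_{0},\, X_{n-1} = x,\, X_{n} = s_{0}\}$ and apply the Markov property at time $n-1$ to peel off the final transition: the event $\{X_{1} \neq s_{0}, \ldots, X_{n-1} \neq s_{0},\, X_{n-1} = x\}$ is measurable with respect to the history up to time $n-1$, and on this event $X_{n-1} = x$, so conditioning gives the probability $P_{F}(x, s_{0})$ of the last jump, independently of the past. Hence
\begin{equation*}
    \sP_{s_{0}}\big(\sigma_{s_{0}} = n,\, X_{n-1} = x\big) = \sP_{s_{0}}\big(X_{1} \neq s_{0}, \ldots, X_{n-1} \neq s_{0},\, X_{n-1} = x\big)\, P_{F}(x, s_{0}).
\end{equation*}

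Next I would identify the avoidance event $\{X_{1} \neq s_{0}, \ldots, X_{n-1} \neq s_{0}\}$ with $\{n-1 < \sigma_{s_{0}}\}$, which is immediate from the definition $\sigma_{s_{0}} = \inf\{k \geq 1 \mid X_{k} = s_{0}\}$. After the substitution $k = n-1$, the sum collapses to $P_{F}(x, s_{0})\sum_{k=0}^{\infty}\sP_{s_{0}}(k < \sigma_{s_{0}},\, X_{k} = x)$, and this series is precisely the second expression for $\lambda(x)$ supplied by \cref{thm:unique-invariant-measure}. This yields $P_{F}^{\top}(x) = \lambda(x)P_{F}(x, s_{0})$, as claimed.

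The argument is short, and the only delicate point is the bookkeeping at the boundary term $k = 0$ (equivalently $n = 1$): under $\sP_{s_{0}}$ we have $X_{0} = s_{0}$, so this term vanishes on both sides exactly when $x \neq s_{0}$. This is guaranteed for every $x \in \gX = \{s \in \gS \mid P_{F}(s, s_{0}) > 0\}$ by the pointed-DAG structure, in which $s_{0}$ is not its own parent; I would flag this so that the two descriptions of the events coincide cleanly and no $n=1$ correction is needed. Everything else is a routine application of the Markov property and reindexing.
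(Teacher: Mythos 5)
Your proof is correct and is essentially the paper's own argument run in the opposite direction: the paper starts from $\lambda(x)P_{F}(x,s_{0})$ and uses the Markov property to turn each term $\E_{s_{0}}\big[\mathbbm{1}(k<\sigma_{s_{0}})\mathbbm{1}(X_{k}=x)\big]P_{F}(x,s_{0})$ into $\E_{s_{0}}\big[\mathbbm{1}(\sigma_{s_{0}}=k+1)\mathbbm{1}(X_{k}=x)\big]$, which is exactly your peel-off of the final transition after partitioning $P_{F}^{\top}(x)$ over the events $\{\sigma_{s_{0}}=n\}$, followed by the same reindexing $k=n-1$. One small remark: the boundary case you flag is actually moot, since the $n=1$ (equivalently $k=0$) terms agree on both sides even if $x=s_{0}$ (both reduce to $\mathbbm{1}(x=s_{0})P_{F}(s_{0},s_{0})$), so the identity needs no appeal to $s_{0}$ not being its own parent.
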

\begin{proof}
    For any $k \geq 0$, we have
    \begin{align}
        \E_{s_{0}}\big[\mathbbm{1}(k < \sigma_{s_{0}})\mathbbm{1}(X_{k} = x)\big]&P_{F}(x, s_{0}) = \E_{s_{0}}\big[\mathbbm{1}(k < \sigma_{s_{0}})\mathbbm{1}(X_{k}=x)P_{F}(X_{k}, s_{0})\big]\label{eq:proof-relation-1}\\
        &= \E_{s_{0}}\big[\mathbbm{1}(k<\sigma_{s_{0}})\mathbbm{1}(X_{k}=x)\E_{X_{k}}[\mathbbm{1}(X_{1}=s_{0})]\big]\label{eq:proof-relation-2}\\
        &= \E_{s_{0}}\big[\mathbbm{1}(k<\sigma_{s_{0}})\mathbbm{1}(X_{k}=x)\E_{s_{0}}[\mathbbm{1}(X_{k+1}=s_{0})\mid X_{0:k}]\big]\label{eq:proof-relation-3}\\
        &= \E_{s_{0}}\big[\mathbbm{1}(k<\sigma_{s_{0}})\mathbbm{1}(X_{k}=x)\mathbbm{1}(X_{k+1}=s_{0})\big]\label{eq:proof-relation-4}\\
        &= \E_{s_{0}}\big[\mathbbm{1}(\sigma_{s_{0}}=k+1)\mathbbm{1}(X_{k}=x)]\label{eq:proof-relation-5}
    \end{align}
    In details, we used an equivalent definition of $P_{F}(X_{k}, s_{0})$ in \cref{eq:proof-relation-2}, as the expectation of the chain moving to $s_{0}$ after one step ($\mathbbm{1}(X_{1}=s_{0})$), starting at $X_{k}$, the Markov property in \cref{eq:proof-relation-3}, the law of total expectation in \cref{eq:proof-relation-4}, and finally the definition of the return time $\sigma_{s_{0}} = \inf\{n\geq 1\mid X_{n}=s_{0}\}$ in \cref{eq:proof-relation-5}. Using the definition of the invariant measure $\lambda$, we then obtain the expected result:
    \begin{align}
        \lambda(x)P_{F}(x, s_{0}) &= \sum_{k=0}^{\infty}\E_{s_{0}}\big[\mathbbm{1}(k < \sigma_{s_{0}})\mathbbm{1}(X_{k}=s)\big]P_{F}(x, s_{0})\\
        &= \sum_{k=0}^{\infty}\E_{s_{0}}\big[\mathbbm{1}(\sigma_{s_{0}}=k+1)\mathbbm{1}(X_{k}=x)\big]\\
        &= \sum_{k=1}^{\infty}\E_{s_{0}}\big[\mathbbm{1}(\sigma_{s_{0}}=k)\mathbbm{1}(X_{k-1}=x)\big]\\
        &= \E_{s_{0}}\big[\mathbbm{1}(X_{\sigma_{s_{0}}-1}=x)\big] = P_{F}^{\top}(x).
    \end{align}
\end{proof}

\begin{proposition}
    The terminating state probability distribution $P^{\top}_{F}$ is a properly defined probability distribution over $\gX$.
    \label{prop:terminating-state-probability-proper-distribution}
\end{proposition}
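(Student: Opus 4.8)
The plan is to verify directly the two defining properties of a probability distribution from \cref{def:terminating-state-probability}: non-negativity and that the total mass equals $1$. Non-negativity is immediate, since for each $x \in \gX$ the quantity $P_F^\top(x) = \E_{s_0}[\mathbbm{1}(X_{\sigma_{s_0}-1} = x)]$ is the expectation of an indicator and therefore lies in $[0, 1]$. The entire content of the proposition thus lies in the normalization, and my strategy is to reduce that normalization to a single almost-sure statement about where the chain sits one step before returning to $s_0$.

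First I would interchange the summation over $\gX$ with the expectation. Because all summands are non-negative and $\gX$ may be countably infinite (recall $\gS$ need not be finite), this is justified by Tonelli's theorem, yielding
\[
    \sum_{x \in \gX} P_F^\top(x) = \E_{s_0}\Big[\sum_{x \in \gX}\mathbbm{1}(X_{\sigma_{s_0}-1} = x)\Big] = \E_{s_0}\big[\mathbbm{1}(X_{\sigma_{s_0}-1} \in \gX)\big].
\]
It then suffices to show that $X_{\sigma_{s_0}-1} \in \gX$ almost surely under $\sP_{s_0}$, after which the displayed expectation collapses to $\E_{s_0}[1] = 1$.

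The key is the characterization of $\gX$ as the parent set of $s_0$, namely $\gX = \{s \in \gS \mid P_F(s, s_0) > 0\}$, so I must argue that the penultimate state before the chain returns to $s_0$ is always a genuine parent of $s_0$. By definition of the return time, on the event $\{\sigma_{s_0} < \infty\}$ we have $X_{\sigma_{s_0}} = s_0$, so the realized transition $X_{\sigma_{s_0}-1} \rightarrow X_{\sigma_{s_0}}$ terminates at $s_0$. In a discrete chain such a transition can only occur with positive probability along an edge of positive transition mass, so $P_F(X_{\sigma_{s_0}-1}, s_0) > 0$ almost surely, which is exactly $X_{\sigma_{s_0}-1} \in \gX$. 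Finally, recurrence (guaranteed by the positive recurrence assumption) ensures $\sP_{s_0}(\sigma_{s_0} < \infty) = 1$, so the indicator above is $1$ almost surely and the masses sum to $1$.

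The main obstacle is the measure-theoretic bookkeeping in this last step: making precise that a realized entry into $s_0$ forces the preceding state into the declared parent set $\gX$. Concretely, one conditions on $X_{\sigma_{s_0}-1}$ and uses the decomposition $\{X_{\sigma_{s_0}-1} = s\} = \bigcup_{k \geq 1}\{\sigma_{s_0} = k,\, X_{k-1} = s\} \subseteq \bigcup_{k \geq 1}\{X_{k-1} = s,\, X_k = s_0\}$, together with $\sP_{s_0}(X_{k-1} = s,\, X_k = s_0) = \sP_{s_0}(X_{k-1} = s)P_F(s, s_0)$, to conclude that $\sP_{s_0}(X_{\sigma_{s_0}-1} = s) = 0$ whenever $P_F(s, s_0) = 0$. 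Everything else---the Tonelli interchange and the finiteness of $\sigma_{s_0}$---is routine given the recurrence hypothesis.
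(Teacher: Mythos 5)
Your proof is correct, but it takes a genuinely different route from the paper's. You argue directly at the level of trajectories: after a Tonelli interchange, the normalization reduces to showing $X_{\sigma_{s_0}-1} \in \gX$ almost surely, which you obtain from the parent-set characterization $\gX = \{s \in \gS \mid P_F(s, s_0) > 0\}$ together with the factorization $\sP_{s_0}(X_{k-1} = s,\ X_k = s_0) = \sP_{s_0}(X_{k-1} = s)\,P_F(s, s_0)$ and a countable union of null events (countability of $\gS \setminus \gX$ is what makes this work, and holds since $\gS$ is discrete). The paper instead routes everything through \cref{lem:relation-unique-invariant-terminating-state-probability}, writing $P_F^\top(x) = \lambda(x)P_F(x, s_0)$ for the canonical invariant measure $\lambda$ of \cref{thm:unique-invariant-measure}, and then sums: $\sum_{x\in\gX}\lambda(x)P_F(x,s_0) = \lambda(s_0) = 1$ by invariance of $\lambda$ at $s_0$ and the normalization $\lambda(s_0)=1$. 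The trade-off is instructive. Your argument is more elementary and needs strictly weaker hypotheses: only recurrence (so that $\sigma_{s_0} < \infty$ a.s.), with no appeal to positive recurrence, nor to the existence or uniqueness of an invariant measure. The paper's argument is heavier but deliberately so: the identity $P_F^\top(x) = \lambda(x)P_F(x,s_0)$ is exactly the relation reused immediately afterwards to prove \cref{thm:invariant-terminating-state}, so proving the proposition through the lemma amortizes the work, and its invariance step at $s_0$ is, morally, the same ``the chain re-enters $s_0$ through a parent'' fact that you establish by hand. One small bookkeeping point you handle implicitly and should state if writing this up: $X_{\sigma_{s_0}-1}$ is only defined on $\{\sigma_{s_0} < \infty\}$, so the indicator in \cref{def:terminating-state-probability} must be read as $\mathbbm{1}(\sigma_{s_0} < \infty,\ X_{\sigma_{s_0}-1}=x)$; recurrence makes the complementary event null, so your final expectation indeed equals $1$.
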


\begin{proof}
    It is easy to see that for all $x\in\gX$, $P_{F}^{\top}(x) \geq 0$. Moreover, using \cref{lem:relation-unique-invariant-terminating-state-probability}, we know that $P_{F}^{\top}(x) = \lambda(x)P_{F}(x, s_{0})$, where $\lambda$ is the unique invariant measure of $P_{F}$ such that $\lambda(s_{0}) = 1$ (see \cref{thm:unique-invariant-measure}). Therefore:
    \begin{equation*}
        \sum_{x\in\gX}P_{F}^{\top}(x) = \sum_{x\in\gX}\lambda(x)P_{F}(x, s_{0}) = \lambda(s_{0}) = 1,
    \end{equation*}
    where we used the invariance of $\lambda$ in the second equality, and the fact that $P_{F}(s, s_{0}) = 0$ for any $s \notin \gX$.
\end{proof}

\invariantterminatingstate*

\begin{proof}
    Since $F$ is an invariant measure of $P_{F}$, by unicity of the invariant measure of $P_{F}$ up to a multiplicative constant (\cref{thm:unique-invariant-measure}), there exists a constant $\alpha > 0$ such that $F = \alpha \lambda$. Using \cref{lem:relation-unique-invariant-terminating-state-probability} and the boundary condition $F(s)P_{F}(s, s_{0}) = R(s)$, we get for all $x \in \gX$
    \begin{equation*}
        P_{F}^{\top}(x) = \lambda(x)P_{F}(x, s_{0}) = \frac{1}{\alpha}F(x)P_{F}(x, s_{0}) = \frac{R(x)}{\alpha}. 
    \end{equation*}
    In fact, since we saw in \cref{prop:terminating-state-probability-proper-distribution} that $P_{F}^{\top}$ is a probability distribution, the multiplicative constant happens to be $\alpha = R(\gX) = \sum_{x'\in\gX}R(x')$ (i.e., the partition function).
\end{proof}

\begin{proposition}
    Let $P_{F}$ be an irreducible and positive recurrent Markov kernel over $\gS$ that admits an invariant measure $F$ such that $\forall x \in \gS$, $F(s)P_{F}(s, s_{0})=R(s)$, where $R$ is a finite measure on $\gX \subseteq \gS$. Then we have $F(s_{0}) = R(\gX) = \sum_{x\in\gX}R(x)$.
\end{proposition}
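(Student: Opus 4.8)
The plan is to read off the result directly from the invariance of $F$ evaluated at the initial state $s_{0}$, rather than re-deriving the uniqueness argument used for \cref{thm:invariant-terminating-state}. Since $F$ is a genuine invariant measure of $P_{F}$, it satisfies \cref{eq:invariant-measure} at every state, and in particular at $s' = s_{0}$:
\begin{equation*}
    F(s_{0}) = \sum_{s\in\gS}F(s)P_{F}(s, s_{0}).
\end{equation*}

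First I would restrict the range of this summation. Recall that $\gX = \{s\in\gS \mid P_{F}(s, s_{0}) > 0\}$ is precisely the set of parents of $s_{0}$, so $P_{F}(s, s_{0}) = 0$ for every $s \notin \gX$ and all such terms vanish. This leaves $F(s_{0}) = \sum_{s\in\gX}F(s)P_{F}(s, s_{0})$. Next I would substitute the boundary condition: by assumption $F(s)P_{F}(s, s_{0}) = R(s)$, so each remaining summand is exactly $R(s)$, and therefore
\begin{equation*}
    F(s_{0}) = \sum_{s\in\gX}R(s) = R(\gX).
\end{equation*}
Finiteness is not an issue since $R$ is a finite measure, so the right-hand side is finite.

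The only real subtlety is conceptual rather than computational: the argument hinges on using the \emph{full} invariance of $F$ at $s_{0}$, which is guaranteed here because $F$ is assumed to be an invariant measure, and which \cref{prop:invariant-measure-no-initial-state} shows is in any case automatically implied by invariance at the other states. As a consistency check, the value $R(\gX)$ matches the normalization constant $\alpha = R(\gX)$ identified in the proof of \cref{thm:invariant-terminating-state}: writing $F = \alpha\lambda$ with $\lambda$ the invariant measure of \cref{thm:unique-invariant-measure} normalized so that $\lambda(s_{0}) = 1$, one immediately gets $F(s_{0}) = \alpha\lambda(s_{0}) = \alpha = R(\gX)$, in agreement with the direct computation above.
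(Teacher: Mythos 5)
Your proof is correct and is essentially the paper's own argument: the paper likewise reads off $F(s_{0}) = \sum_{x\in\gX}F(x)P_{F}(x,s_{0}) = \sum_{x\in\gX}R(x)$ by combining the invariance of $F$ at $s_{0}$ with the boundary condition, with you merely making explicit the restriction of the sum to the parent set $\gX$ and the finiteness remark. The consistency check against $\alpha = R(\gX)$ from \cref{thm:invariant-terminating-state} is a nice extra but not a different route.
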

\begin{proof}
    Using the boundary condition and the invariance of the measure $F$:
    \begin{equation*}
        R(\gX) = \sum_{x\in\gX}R(x) = \sum_{x\in\gX}F(x)P_{F}(x, s_{0}) = F(s_{0}).
    \end{equation*}
\end{proof}

\invariantmeasurenoinitialstate*

\begin{proof}
    One can show (e.g., by induction) that if \cref{eq:invariant-measure-no-initial-state} is satisfied for all $s'\neq s_{0}$ then
    \begin{equation}
        F(s) = \sum_{k=0}^{\infty}F(s_{0})\E_{s_{0}}\big[\mathbbm{1}(k < \sigma_{s_{0}})\mathbbm{1}(X_{k} = s)\big]
    \end{equation}
    Since $P_{F}$ is irreducible and positive recurrent, by \cref{thm:unique-invariant-measure} it admits an invariant measure. The only non-trivial statement one must show is that if \cref{eq:invariant-measure-no-initial-state} is satisfied for any $s'\neq s_{0}$, then it is also satisfied for $s' = s_{0}$. Since $P_{F}$ is positive recurrent, any Markov chain starting at $s_{0}$ must eventually return to $s_{0}$ in finite time. In other words
    \begin{equation}
        \sum_{k=1}^{\infty}\E_{s_{0}}\big[\mathbbm{1}(\sigma_{s_{0}}=k)\big] = 1.
    \end{equation}
    Furthermore, it is clear that at any point in time $k \geq 0$, $X_{k}$ must be in one of the states of $\gS$, meaning that $\sum_{s\in\gS} \mathbbm{1}(X_{k}=s) = 1$. Therefore
    \begingroup
    \allowdisplaybreaks
    \begin{align}
        \sum_{s\in\gS}F(s)P_{F}(s, s_{0}) &= \sum_{s\in\gS}\sum_{k=0}^{\infty}F(s_{0})\E_{s_{0}}\big[\mathbbm{1}(k < \sigma_{s_{0}})\mathbbm{1}(X_{k}=s)\big]P_{F}(s, s_{0})\\
        &= F(s_{0})\sum_{s\in\gS}\sum_{k=1}^{\infty}\E_{s_{0}}\big[\mathbbm{1}(\sigma_{s_{0}} = k)\mathbbm{1}(X_{k-1}=s)\big]\label{eq:proof-invariant-measure-no-initial-state-2}\\
        &= F(s_{0})\sum_{k=1}^{\infty}\E_{s_{0}}\Bigg[\mathbbm{1}(\sigma_{s_{0}} = k)\sum_{s\in\gS}\mathbbm{1}(X_{k-1}=s)\Bigg]\\
        &= F(s_{0})\sum_{k=1}^{\infty}\E_{s_{0}}\big[\mathbbm{1}(\sigma_{s_{0}}=k)\big] = F(s_{0}),
    \end{align}
    \endgroup
    where we used the proof of \cref{lem:relation-unique-invariant-terminating-state-probability} in \cref{eq:proof-invariant-measure-no-initial-state-2}.
\end{proof}

\subsection{General spaces}
\label{app:general-spaces}

\begin{lemma}
    The terminating state probability distribution $P_{F}^{\top}$ is related to the invariant measure $\lambda$ defined in \cref{thm:unique-invariant-measure-general} by, $\forall B \in \Sigma_{\gX}$,
    \begin{equation*}
        P_{F}^{\top}(B) = \int_{B}\varepsilon(x)\lambda(dx),
    \end{equation*}
    where $\varepsilon$ is the positive measurable function defined in \cref{def:minorization-condition}.
    \label{lem:relation-unique-invariant-terminating-state-probability-general}
\end{lemma}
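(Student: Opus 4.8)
The plan is to mirror the proof of \cref{lem:relation-unique-invariant-terminating-state-probability} in the discrete case, now invoking the explicit form of the invariant measure from \cref{thm:unique-invariant-measure-general} and carrying an extra set indicator through the same conditioning argument used in that theorem's proof. Since $\gX \in \Sigma$ and $\Sigma_{\gX}$ is the trace $\sigma$-algebra, any $B \in \Sigma_{\gX}$ also lies in $\Sigma$, so $s \mapsto \varepsilon(s)\mathbbm{1}_{B}(s)$ is a bounded measurable function on $\gS$ (recall $\varepsilon(s) \in [0,1]$). Applying the form of $\lambda$ in \cref{thm:unique-invariant-measure-general} to this function gives
\begin{equation*}
    \int_{B}\varepsilon(x)\lambda(dx) = \sum_{k=1}^{\infty}\E_{\gS_{0}}\big[\mathbbm{1}(k \leq \sigma_{\gS_{0}})\mathbbm{1}_{B}(X_{k})\varepsilon(X_{k})\big].
\end{equation*}

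The core step is to establish, for each $k \geq 1$, the refined identity
\begin{equation*}
    \E_{\gS_{0}}\big[\mathbbm{1}(k \leq \sigma_{\gS_{0}})\mathbbm{1}_{B}(X_{k})\varepsilon(X_{k})\big] = \E_{\gS_{0}}\big[\mathbbm{1}(\sigma_{\gS_{0}} = k)\mathbbm{1}_{B}(X_{k})\big],
\end{equation*}
which is exactly \cref{eq:proof-unique-invariant-measure-1}--\cref{eq:proof-unique-invariant-measure-4} with the localizing factor $\mathbbm{1}_{B}(X_{k})$ inserted. Concretely, I would rewrite $\varepsilon(X_{k}) = \E_{\gS_{0}}[\mathbbm{1}(Y_{k} = 1) \mid Z_{0:k-1}, X_{k}]$ using the interpretation of $\varepsilon$ as the termination probability (\cref{sec:splitting-technique}), then note that both $\mathbbm{1}_{B}(X_{k})$ and $\mathbbm{1}(k \leq \sigma_{\gS_{0}}) = \mathbbm{1}(\sigma_{\gS_{0}} \geq k)$ are measurable with respect to $\sigma(Z_{0:k-1}, X_{k})$ --- the latter because non-return before time $k$ is determined by $Z_{1}, \ldots, Z_{k-1}$ --- so they pass outside the conditional expectation, and the tower property collapses it to $\mathbbm{1}(Y_{k}=1)$. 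Finally I would invoke $\mathbbm{1}(k \leq \sigma_{\gS_{0}})\mathbbm{1}(Y_{k}=1) = \mathbbm{1}(\sigma_{\gS_{0}} = k)$: since $\varepsilon^{-1}((0,+\infty)) = \gX$, the event $\{Y_{k} = 1\}$ forces $\varepsilon(X_{k}) > 0$, hence $X_{k} \in \gX$ and $Z_{k} \in \gS_{0}$, and combined with $\sigma_{\gS_{0}} \geq k$ and $k \geq 1$ this pins the first return to exactly $k$.

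To finish, I would sum the per-$k$ identity over $k \geq 1$. On $\{\sigma_{\gS_{0}} = k\}$ we have $\mathbbm{1}_{B}(X_{k}) = \mathbbm{1}_{B}(X_{\sigma_{\gS_{0}}})$, and since Harris recurrence guarantees $\sigma_{\gS_{0}} < \infty$ almost surely (\cref{def:harris-recurrence}), the sum telescopes into a single expectation,
\begin{equation*}
    \sum_{k=1}^{\infty}\E_{\gS_{0}}\big[\mathbbm{1}(\sigma_{\gS_{0}} = k)\mathbbm{1}_{B}(X_{k})\big] = \E_{\gS_{0}}\big[\mathbbm{1}_{B}(X_{\sigma_{\gS_{0}}})\big] = P_{F}^{\top}(B),
\end{equation*}
the last equality being \cref{def:terminating-state-probability-general}. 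The main obstacle is the measurability bookkeeping in the middle step --- correctly identifying which $\sigma$-algebra each indicator is adapted to, so that the conditioning argument of \cref{thm:unique-invariant-measure-general} carries through unchanged once the factor $\mathbbm{1}_{B}(X_{k})$ is attached; the rest is a direct transcription of the discrete-case lemma together with that theorem's proof.
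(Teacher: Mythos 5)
Your proposal is correct and follows essentially the same route as the paper's proof: apply the explicit form of $\lambda$ from \cref{thm:unique-invariant-measure-general} to the bounded measurable function $\varepsilon\mathbbm{1}_{B}$, establish the per-$k$ identity $\E_{\gS_{0}}[\mathbbm{1}(k \leq \sigma_{\gS_{0}})\mathbbm{1}_{B}(X_{k})\varepsilon(X_{k})] = \E_{\gS_{0}}[\mathbbm{1}(\sigma_{\gS_{0}}=k)\mathbbm{1}_{B}(X_{k})]$ via the termination-probability interpretation of $\varepsilon$ and the tower property, and sum using Harris recurrence. If anything, your justification of the step $\mathbbm{1}(k \leq \sigma_{\gS_{0}})\mathbbm{1}(Y_{k}=1) = \mathbbm{1}(\sigma_{\gS_{0}}=k)$ --- noting that $Y_{k}=1$ forces $\varepsilon(X_{k})>0$, hence $X_{k}\in\gX$ and $Z_{k}\in\gS_{0}$ --- is spelled out more carefully than in the paper, which states it tersely.
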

\begin{proof}
    The proof follows the same pattern as the proof of \cref{lem:relation-unique-invariant-terminating-state-probability}. We first show that for any $k \geq 1$, we have
    \begin{align}
        \E_{\gS_{0}}\big[\mathbbm{1}(k \leq \sigma_{\gS_{0}})\mathbbm{1}_{B}(X_{k})&\varepsilon(X_{k})\big] = \E_{\gS_{0}}\big[\mathbbm{1}(k \leq \sigma_{\gS_{0}})\mathbbm{1}_{B}(X_{k})\E_{Z_{k-1}}[\mathbbm{1}(Y_{1}=1)\mid X_{1}]\big]\\
        &= \E_{\gS_{0}}\big[\mathbbm{1}(k \leq \sigma_{\gS_{0}})\mathbbm{1}_{B}(X_{k})\E_{\gS_{0}}[\mathbbm{1}(Y_{k}=1)\mid Z_{0:k-1},X_{k}]\big]\\
        &= \E_{\gS_{0}}\big[\mathbbm{1}(k \leq \sigma_{\gS_{0}})\mathbbm{1}_{B}(X_{k})\mathbbm{1}(Y_{k}=1)\big]\\
        &= \E_{\gS_{0}}\big[\mathbbm{1}(\sigma_{\gS_{0}}=k)\mathbbm{1}_{B}(X_{k})\big]
    \end{align}
    The derivation above follows similar steps as in the proofs of \cref{thm:unique-invariant-measure-general} \& \cref{lem:relation-unique-invariant-terminating-state-probability}. Using the definition of the invariant measure $\lambda$, and given that for any $B\in \Sigma_{\gX}$ the measurable function $\mathbbm{1}_{B}\varepsilon$ is non-negative, we have
    \begin{align}
        \int_{B}\varepsilon(x)\lambda(dx) &= \sum_{k=1}^{\infty}\E_{\gS_{0}}\big[\mathbbm{1}(k \leq \sigma_{\gS_{0}})\mathbbm{1}_{B}(X_{k})\varepsilon(X_{k})\big]\\
        &= \sum_{k=1}^{\infty}\E_{\gS_{0}}\big[\mathbbm{1}(\sigma_{\gS_{0}} = k)\mathbbm{1}_{B}(X_{k})\big]\\
        &= \E_{\gS_{0}}\big[\mathbbm{1}_{B}(X_{\sigma_{\gS_{0}}})\big] = P_{F}^{\top}(B).
    \end{align}
\end{proof}

\begin{proposition}
    The terminating state probability distribution $P_{F}^{\top}$ defined in \cref{def:terminating-state-probability-general} is a properly defined probability distribution over $\gX$.
    \label{prop:terminating-state-probability-proper-distribution-general}
\end{proposition}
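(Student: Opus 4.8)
The plan is to mirror the discrete argument of \cref{prop:terminating-state-probability-proper-distribution} almost verbatim, leaning entirely on the integral representation established in \cref{lem:relation-unique-invariant-terminating-state-probability-general}, namely $P_{F}^{\top}(B) = \int_{B}\varepsilon(x)\lambda(dx)$ for all $B \in \Sigma_{\gX}$, together with the normalization clause of \cref{thm:unique-invariant-measure-general}. There are really only two things to verify: that $P_{F}^{\top}$ is a non-negative (countably additive) measure on $(\gX, \Sigma_{\gX})$, and that its total mass is $1$.

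For the measure structure, I would first invoke \cref{lem:relation-unique-invariant-terminating-state-probability-general} to write $P_{F}^{\top}$ as the integral of the fixed non-negative measurable function $\varepsilon$ against the measure $\lambda$, restricted to the trace $\sigma$-algebra $\Sigma_{\gX}$. Since $\varepsilon(x) \geq 0$ and $\lambda$ is a measure, $P_{F}^{\top}(B) \geq 0$ for every $B \in \Sigma_{\gX}$, and countable additivity is inherited directly from the integral (indeed $P_{F}^{\top}$ is exactly the measure admitting $\varepsilon$ as a density with respect to $\lambda$, cut down to $\gX$). So $P_{F}^{\top}$ is automatically a non-negative measure on $(\gX, \Sigma_{\gX})$.

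For the normalization, I would take $B = \gX$ in the lemma, giving $P_{F}^{\top}(\gX) = \int_{\gX}\varepsilon(x)\lambda(dx)$. The key observation is that $\varepsilon$ vanishes outside $\gX$: by \cref{def:minorization-condition} we have $\varepsilon^{-1}((0,+\infty)) = \gX$, so $\varepsilon(s) = 0$ for every $s \in \gS \setminus \gX$, and therefore $\int_{\gX}\varepsilon(x)\lambda(dx) = \int_{\gS}\varepsilon(s)\lambda(ds)$. The latter integral equals $1$ by the final statement of \cref{thm:unique-invariant-measure-general}, which asserts that $\lambda$ is precisely the invariant measure normalized so that $\int_{\gS}\varepsilon(s)\lambda(ds) = 1$ (and this is the same $\lambda$ appearing in the lemma). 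Hence $P_{F}^{\top}(\gX) = 1$, completing the proof.

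I do not expect a genuine obstacle here; the content is essentially bookkeeping. The only points requiring a little care are ensuring $\gX \in \Sigma_{\gX}$ (it is the ambient set of the trace $\sigma$-algebra, hence measurable) so that the lemma applies with $B = \gX$, and correctly identifying that the $\lambda$ in the relation lemma is the \emph{normalized} invariant measure of \cref{thm:unique-invariant-measure-general} rather than an arbitrary scalar multiple. Once the support property $\varepsilon = 0$ on $\gS \setminus \gX$ is used to pass from integration over $\gX$ to integration over $\gS$, the normalization is immediate.
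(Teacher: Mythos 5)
Your proposal is correct and follows essentially the same route as the paper's proof: invoke \cref{lem:relation-unique-invariant-terminating-state-probability-general} to write $P_{F}^{\top}$ as the integral of $\varepsilon$ against the normalized invariant measure $\lambda$, then use the support property $\varepsilon^{-1}((0,+\infty)) = \gX$ from \cref{def:minorization-condition} to replace $\int_{\gX}\varepsilon\,d\lambda$ by $\int_{\gS}\varepsilon\,d\lambda = 1$ via \cref{thm:unique-invariant-measure-general}. Your explicit remarks on countable additivity and on identifying the correctly normalized $\lambda$ are slightly more careful bookkeeping than the paper includes, but the argument is identical in substance.
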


\begin{proof}
    Using \cref{lem:relation-unique-invariant-terminating-state-probability-general}, we know that $P_{F}^{\top}$ is related to the unique invariant measure $\lambda$ such that $\int_{\gS}\varepsilon(s)\lambda(ds) = 1$. Therefore:
    \begin{equation}
        P_{F}^{\top}(\gX) = \int_{\gX}\varepsilon(x)\lambda(dx) = \int_{\gS}\varepsilon(s)\lambda(ds) = 1,
    \end{equation}
    where we used the fact that $\varepsilon$ is positive only on $\gX$ (see \cref{def:minorization-condition}).
\end{proof}

\invariantterminatingstategeneral*

\begin{proof}
    The proof is similar to the one of \cref{thm:invariant-terminating-state}. By unicity of the invariant measure of $P_{F}$ up to a multiplicative constant (\cref{thm:unique-invariant-measure-general}), there exists a constant $\alpha > 0$ such that $F = \alpha \lambda$. Using \cref{lem:relation-unique-invariant-terminating-state-probability-general}, and the boundary conditions in \cref{eq:boundary-conditions-general}, we get for all $B \in \Sigma_{\gX}$:
    \begin{equation}
        P_{F}^{\top}(B) = \int_{B}\varepsilon(x)\lambda(dx) = \frac{1}{\alpha}\int_{B}\varepsilon(x)F(dx) = \frac{R(B)}{\alpha} \propto R(B).
    \end{equation}
\end{proof}

\finitelyabsorbingharris*

\begin{proof}
    Let $(\gS, \gT)$ be the topological space over which the measurable pointed graph $G$ is defined. Recall that a measurable pointed graph is finitely absorbing in $\exists N > 0$ such that
    \begin{equation}
        \mathrm{supp}\big(\kappa^{N}(s_{0}, \cdot)\big) = \{\bot\}.
        \label{eq:finitely-absorbing}
    \end{equation}
    We assume that $N$ is the minimal integer satisfying this property (i.e., the maximal trajectory length). Let $P_{F}$ be a Markov kernel absolutely continuous wrt. $\kappa$. Then we necessarily have that $\mathrm{supp}\big(P_{F}^{N}(s_{0}, \cdot)\big) = \{\bot\}$ as well. Let $B \in \gT$ be a set such that $\phi(B) > 0$. It is clear that there exists $0 \leq m \leq N$ such that $\forall s \in B, \mathrm{supp}\big(P_{F}^{m}(s, \cdot)\big) = \{\bot\}$ (otherwise, this would contradict the maximal trajectory length).
    
    By definition of a measurable pointed graph, $\exists n \geq 0, \kappa^{n}(s_{0}, B) > 0$ (and necessarily $P_{F}^{n}(s_{0}, B) > 0$ as well). If we identify the source and sink states $s_{0} \equiv \bot$, then by Chapman-Kolmogorov equation we have $\forall s \in B$:
    \begin{equation}
        P_{F}^{n+m}(s, B) = \int_{\gS}P_{F}^{m}(s, ds')P_{F}^{n}(s', B) = P_{F}^{m}(s, \{\bot\})P_{F}^{n}(s_{0}, B) > 0.
    \end{equation}
    Moreover, by minimality of $N$, we also have that $\forall n > N, P_{F}^{n}(s, B) = 0$. Therefore,
    \begin{equation}
        \forall s \in B,\quad \sP_{s}(\sigma_{B} \leq N < \infty) = 1.
    \end{equation}
\end{proof}

\end{document}